\documentclass{article}
\PassOptionsToPackage{numbers}{natbib}
\usepackage[preprint]{neurips_2026}

\usepackage[utf8]{inputenc} 
\usepackage[T1]{fontenc}    
\usepackage{hyperref}       
\usepackage{url}            
\usepackage{booktabs}       
\usepackage{amsfonts}       
\usepackage{nicefrac}       
\usepackage{microtype}      
\usepackage{xcolor}         

\usepackage{graphicx} 

\title{Multi-Environment POMDPs \\ with Finite-Horizon Objectives}

\author{L{\'e}onard Brice, Filip Cano, Krishnendu Chatterjee, Thomas A. Henzinger, Stefanie Muroya \\
Institute of Science and Technology Austria
}

\usepackage[english]{babel}
\usepackage{amsfonts}

\usepackage{mathtools}

\usepackage{amsmath}
\usepackage{amsthm}
\usepackage{graphicx}
\usepackage{cleveref}
\usepackage{thm-restate}

\usepackage{caption}
\usepackage{subcaption}

\usepackage{placeins} 

\usepackage{tikz}
\usetikzlibrary{automata, arrows, positioning, decorations.markings, decorations.pathreplacing}
\usetikzlibrary {decorations.pathmorphing, decorations.shapes}
\usetikzlibrary{backgrounds,fit,shapes.geometric}
\usetikzlibrary {graphs, quotes}
\usetikzlibrary{calc}
\usetikzlibrary{shapes,shapes.geometric,fit}
\usepackage[new]{old-arrows}
\usetikzlibrary {graphs,shapes.geometric,quotes}
\usetikzlibrary{decorations.pathreplacing}
\usetikzlibrary{automata}
\usetikzlibrary{calc}
\usetikzlibrary{arrows,automata,decorations.pathmorphing}
\usetikzlibrary{shapes,shapes.geometric,fit,calc,automata}

\usepackage{algorithm}
\usepackage{algorithmicx}
\usepackage[noend]{algpseudocode}

\newtheorem{problem}{Problem}

\newtheorem{theorem}{Theorem}

\newtheorem{definition}{Definition}

\theoremstyle{remark}
\newtheorem{remark}{Remark}
\newtheorem{example}{Example}

\newcommand{\distribution}[1]{\mathcal{D}(#1)}
\newcommand{\supp}[1]{\mathsf{Supp}(#1)}
\newcommand{\horizon}[0]{k}
\newcommand{\threshold}[0]{\lambda}
\newcommand{\prob}[0]{\mathbb{P}}

\newcommand{\NN}{\mathbb{N}}
\newcommand{\RR}{\mathbb{R}}
\newcommand{\ZZ}{\mathbb{Z}}
\newcommand{\QQ}{\mathbb{Q}}

\newcommand{\POMDP}[0]{\mathcal{P}}
\newcommand{\MEPOMDP}[0]{\mathcal{M}}
\newcommand{\POMDPStates}[0]{S}
\newcommand{\POMDPState}[0]{s}
\newcommand{\POMDPStatebis}[0]{t}
\newcommand{\POMDPObs}[0]{\mathcal{O}}

\newcommand{\obs}[0]{o}
\newcommand{\POMDPActions}[0]{A}
\newcommand{\POMDPAction}[0]{a}
\newcommand{\POMDPMapObs}[0]{\mathsf{obs}}
\newcommand{\POMDPTransF}[0]{\delta}

\newcommand{\POMDPTraj}[0]{\tau}

\newcommand{\allTrajectories}[0]{\mathsf{Traj}}

\newcommand{\obsSeq}[0]{\omega}
\newcommand{\allObsSeqs}[0]{\mathsf{ObsSeq}}

\newcommand{\updateOp}[0]{\mathsf{update}}

\newcommand{\reward}{r}
\newcommand{\payoff}[0]{\mathsf{payoff}}
\newcommand{\MaxReward}{R}
\newcommand{\expected}[0]{\mathbb{E}}

\newcommand{\ConstructPolicy}{\textnormal{\scshape ConstructPolicy}}
\newcommand{\ConstructDetPolicy}{\textnormal{\scshape ConstructDetPolicy}}
\newcommand{\main}{\textnormal{\scshape Main}}
\newcommand{\CheckAchievableValue}{\textnormal{\scshape CheckAchievableValue}}
\newcommand{\auxFunc}{\textnormal{\scshape partialSum}}
\newcommand{\buildFrontier}{\textnormal{\scshape BuildFrontier}}
\newcommand{\prune}{\textnormal{\scshape Prune}}

\newcommand{\belief}[0]{\beta}

\newcommand{\policy}[0]{\sigma}
\newcommand{\policies}[1]{\mathsf{Pol}(#1)}
\newcommand{\detPolicies}[1]{\mathsf{DetPol}(#1)}

\newcommand{\multibelief}{\bar{\beta}}
\newcommand{\MEPPoint}[0]{\bar{x}}
\newcommand{\WPPoint}[0]{x}
\newcommand{\MEP}[0]{\mathsf{mep}}

\newcommand{\PSPACE}{\mathsf{PSPACE}}
\newcommand{\NPSPACE}{\mathsf{NPSPACE}}

\newcommand{\NEXPTIME}{\mathsf{NEXPTIME}}
\newcommand{\InputSize}{\mathcal{N}}

\newcommand{\Conv}{\mathsf{Conv}}

\newcommand{\False}{\mathsf{False}}
\newcommand{\True}{\mathsf{True}}
\newcommand{\isXpoint}{\mathsf{is\bar{X}point}}
\renewcommand{\l}{\ell}
\renewcommand{\epsilon}{\varepsilon}
\renewcommand{\phi}{\varphi}

\newcommand{\RS}{\mathsf{RS}}
\newcommand{\RN}{\mathsf{RN}}
\newcommand{\IFF}{\mathsf{IFF}}

 \usepackage{todonotes}

\usepackage[framemethod=TikZ]{mdframed}

\tikzset{st/.style={state,minimum size=20pt,fill, top color=white, bottom color=black!20}}
\tikzset{stoch/.style={state, fill=black, inner sep=1pt, text=white, minimum size=0pt}}

\begin{document}

\maketitle

\begin{abstract}
  Partially Observable Markov Decision Processes (POMDPs) are systems in which one agent interacts with a stochastic environment, and receives only partial information about the current state.
  In a multi-environment POMDP (MEPOMDP), the initial state is unknown, and assumed to be adversarially chosen.
    In this work we focus on computing the optimal value and policy in MEPOMDPs with finite-horizon objectives.
    That problem is known to be $\PSPACE$-complete in POMDPs.
    Our main results are as follows: (1) we establish that it is also $\PSPACE$-complete in the more general setting of MEPOMDPs; (2) we present a practical algorithm and evaluate it on classical benchmarks, significantly outperforming the only previously known algorithm.
\end{abstract}

\section{Introduction}

Sequential decision-making under uncertainty is a central challenge in artificial intelligence research.
\emph{Markov Decision Processes} (MDPs) constitute a standard theoretical model to capture settings in which an agent interacts with a stochastic environment.
An MDP consists of a finite set of \emph{states} and a finite set of \emph{actions}; given a state, each action choice stochastically updates the state and gives per-stage rewards.
MDPs have already shown a wide range of applications, from management science to robotics~\cite{puterman1994,sutton2018,thrun2005probabilistic}.
They also have many extensions: we discuss two prominent ones. 

The first extension of MDPs is known as \emph{partial-observation MDPs} (POMDPs).
While in MDPs the agent always has perfect observation about the state, in POMDPs, the agent only receives some information about the state.
POMDPs are remarkably expressive and have found applications in robotics~\citep{thrun2005probabilistic}, healthcare~\citep{hauskrecht2000value}, and autonomous planning more broadly.

The second extension is to \emph{multi-environment MDPs} (MEMDPs), where instead of a single environment, there are multiple possible environments that the agent is unaware of.
This setting was studied for two environments in~\citep{DBLP:conf/fsttcs/RaskinS14} and later for many environments~\citep{DBLP:conf/icalp/Chatterjee0RS25,vegt2023memdp}.
MEMDPs have also been studied in several applications such as recommander systems and robot navigation~\citep{ICAPS20paper104}. 

Combining the above two aspects leads to \emph{multi-environment POMDPs} (MEPOMDPs).
Those were introduced in~\cite{bovymulti}, where it was established that they constitute a robust model that is equivalent to various natural settings, such as adversarial-belief POMDPs or multi-objective POMDPs.

\paragraph{Contributions.}

In this work, we consider MEPOMDPs with finite-horizon objectives, where the agent's goal is to maximise the reward sum for a finite-horizon length $\horizon$.
We consider both exact and approximate computation of the optimal value the agent can achieve.
Our results are as follows.

\emph{Complexity.} We establish that both the exact and approximate problems are in $\PSPACE$, and therefore $\PSPACE$-complete, using the lower bound known for POMDPs.

\emph{Space-efficient algorithm.} While in the complexity result we present a non-deterministic algorithm, whose determinisation using standard techniques leads to an algorithm with substantial (albeit polynomial) space usage, we present a deterministic space-efficient algorithm.

\emph{Practical algorithm and experiments.} We then complement the above theoretical results with a time-efficient practical algorithm, and test its performance and scalability empirically. Our experiments show that our method obtains better performance than the previously existing state of the art tool.

\paragraph{Related works.}
 
\emph{Complexity of POMDPs.}
The landmark paper of Papadimitriou and Tsitsiklis~\citep{papadimitriou} establishes that the decision version of value computation of POMDPs, under finite horizon, is $\PSPACE$-complete.
In~\citep{madani1999undecidability}, Madani, Hanks, and Condon show undecidability for the infinite-horizon case, with undiscounted and discounted payoff, a result that motivates the focus on finite horizon throughout our work.

 
\emph{Multi-environment MDPs.}
The main works about MEMDPs have been described above~\citep{DBLP:conf/fsttcs/RaskinS14,DBLP:conf/icalp/Chatterjee0RS25,vegt2023memdp,ICAPS20paper104}.
None of those papers consider partial observation.
 
\emph{Robust POMDPs.}
In~\citep{pmlr-v37-osogami15}, Osogami introduces \emph{robust POMDPs}, i.e. POMDPs equipped with a continuous uncertainty set for the transition and/or observation probabilities.
Multi-environment POMDPs can be seen as a subcase of robust POMDPs, where the uncertainty set is finite.
In~\citep{krale2025evaluating}, Bovy et al. propose a method to compute an optimal policy in a robust POMDP.
However, this result does not induce any computational complexity result for MEPOMDPs.
 
\emph{Multi-environment POMDPs.}
MEPOMDPs were first studied by Bovy et al.~\citep{bovymulti}, who propose an exact algorithm for the finite horizon case, and an approximate one for the infinite-horizon discounted case.
However, that paper does not provide any complexity result, and the algorithm proposed is significantly slower than the one we present here, as we detail in \Cref{sec:experiments}.

\section{Background and problem statement}\label{sec:background}
Given a set $S$, we denote the set of probability distributions over $S$ as $\distribution{S}$, and the support of a distribution $d$ as $\supp{d}$.
The set of natural numbers (with 0) is $\NN$.
Given $n \in \NN$, we write $[n] = \{1, \dots, n\}$.
For a set $S$, and an indexing set $I$, when we have defined an element $x_i \in S$ for every $i \in I$, we use the bar notation $\bar{x}$ for the tuple $(x_i)_{i \in I} \in X^I$.
Conversely, when we introduce a tuple $\bar{x}$, the notation $x_i$ refers to the element of index $i \in I$.
We write $\bar{x} \leq \bar{y}$ when the tuple $\bar{y}$ \emph{dominates} $\bar{x}$, i.e., when we have $x_i \leq y_i$ for every index $i$.
Given a set $S \subseteq \RR^n$, we say that $S$ dominates $\bar x$, and write $\bar x\leq S$, when there exists $\bar y\in S$ such that $\bar x\leq \bar y$.
We denote by $\Conv(S) \subseteq \RR^n$ the convex closure of $S$, i.e., the set of points $\bar{x}$ such that there exist points $\bar{y}_1, \dots, \bar{y}_k \in S$ and coefficients $\alpha_1, \dots, \alpha_k \in [0,1]$ with $\sum_i \alpha_i = 1$ such that $\sum_i \alpha_i \bar{y}_i = \bar{x}$.

\subsection{POMDPs}

We now define the model that is considered in this paper.

\begin{definition}[Partially Observable Markov Decision Process (POMDP)] A POMDP is a tuple $\POMDP = \langle\POMDPStates, \POMDPActions, \POMDPObs, \POMDPTransF: \POMDPStates \times \POMDPActions \rightarrow \distribution{\POMDPStates}, \POMDPMapObs: \POMDPStates \rightarrow \POMDPObs, \reward: \POMDPStates \rightarrow \ZZ \rangle$ that consists of 
(i)~a finite set $\POMDPStates$ of states, 
(ii)~a finite set $\POMDPActions$ of actions, 
(iii)~a finite set $\POMDPObs$ of observations, 
(iv)~a stochastic transition function $\POMDPTransF$,
(v)~an observation function $\POMDPMapObs$, and
(vi)~a reward function $\reward$.
\end{definition}

\begin{remark}[Equivalence with other models]
\label{rmk:equivalentpomdps}
    POMDPs are often described with more general observation and reward functions, that depend on states and actions, and have probabilistic outcome.
    However, this is equivalent to the simple model that we consider (see Remark~4 in~\citep{CHATTERJEE2016878}).
\end{remark}

A MDP is a special case of POMDPs where each state has a unique observation.

A \emph{trajectory} of length $\horizon \in \NN$ in the POMDP $\POMDP$ is a finite sequence
$\POMDPTraj = (\POMDPState_0, \POMDPAction_0, \POMDPState_1, \dots, \POMDPAction_{\horizon-1}, \POMDPState_{\horizon}) \in (\POMDPStates \cdot \POMDPActions)^* \cdot \POMDPStates$.
The protagonist of a POMDP aims at generating trajectories that maximise her \emph{payoff}, i.e., the sum of the rewards collected on the way, which we write
$\payoff(\POMDPTraj) = \sum_{\l \leq \horizon} \reward(\POMDPState_\l)$.

An \emph{observation sequence} in $\POMDP$ is a finite sequence
$(\obs_0, \POMDPAction_0, \obs_1, \dots, \POMDPAction_{\horizon-1}, \obs_{\horizon}) \in (\POMDPObs \cdot \POMDPActions)^* \cdot \POMDPObs$.

The set of trajectories in the POMDP $\POMDP$ is denoted by $\allTrajectories({\POMDP})$, and the set of observation sequences by $\allObsSeqs({\POMDP})$.
We define $\allTrajectories_\horizon({\POMDP})$ as the set of trajectories of length $\horizon$.
Each trajectory $\POMDPTraj$ is associated with an observation sequence, denoted by
$\POMDPMapObs(\POMDPTraj) = (
    \POMDPMapObs(\POMDPState_0), 
    \POMDPAction_0, 
    \POMDPMapObs(\POMDPState_1), 
    \cdots,
    \POMDPAction_{\horizon-1},
    \POMDPMapObs(\POMDPState_\horizon)
    )$.

\begin{figure}[t]
            \centering
			\begin{tikzpicture}[->,>=latex,shorten >=1pt, initial text={}, scale=1, every node/.style={scale=0.8, transform shape}, node distance=20mm]
				\node[initial above, st] (s1) at (0, 0) {$s_1$};
				\node[initial above, st] (s2) at (7, 0) {$s_2$};
                
                \node[st, below left of=s1] (t1) {$t_1$};
                \node[st, double, below right of=s1] (t2) {$t_2$};
                \node[stoch, left of=s1] (stoch1) {};
                \node[stoch, right of=s1] (stoch2) {};

                \node[st, above left of=stoch1] (t3) {$t_3$};
                \node[st, below left of=stoch1, double] (t4) {$t_4$};
                \node[st, above right of=stoch2] (t5) {$t_5$};
                \node[st, below right of=stoch2, double] (t6) {$t_6$};
                
                \node[st, double, below left of=s2] (t7) {$t_7$};
                \node[st, below right of=s2] (t8) {$t_8$};
                \node[stoch, left of=s2] (stoch3) {};
                \node[stoch, right of=s2] (stoch4) {};

                \node[st, above left of=stoch3] (t9) {$t_9$};
                \node[st, below left of=stoch3, double] (t10) {$t_{10}$};
                \node[st, above right of=stoch4] (t11) {$t_{11}$};
                \node[st, below right of=stoch4, double] (t12) {$t_{12}$};

                \path (s1) edge node[below right] {$a$} (t1);
                \path (s1) edge node[below left] {$b$} (t2);
                \path (s1) edge node[above] {$c$} (stoch1);
                \path (s1) edge node[above] {$d$} (stoch2);
                \path (s2) edge node[below right] {$a$} (t7);
                \path (s2) edge node[below left] {$b$} (t8);
                \path (s2) edge node[above] {$c$} (stoch3);
                \path (s2) edge node[above] {$d$} (stoch4);

                \path (stoch1) edge node[above right] {$0.1$} (t3);
                \path (stoch1) edge node[below right] {$0.9$} (t4);
                \path (stoch2) edge node[above left] {$0.4$} (t5);
                \path (stoch2) edge node[below left] {$0.6$} (t6);
                \path (stoch3) edge node[above right] {$0.4$} (t9);
                \path (stoch3) edge node[below right] {$0.6$} (t10);
                \path (stoch4) edge node[above left] {$0.1$} (t11);
                \path (stoch4) edge node[below left] {$0.9$} (t12);

                \path (t1) edge[loop below] (t1);
                \path (t2) edge[loop below] (t2);
                \path (t3) edge[loop left] (t3);
                \path (t4) edge[loop left] (t4);
                \path (t5) edge[loop right] (t5);
                \path (t6) edge[loop right] (t6);
                \path (t7) edge[loop below] (t7);
                \path (t8) edge[loop below] (t8);
                \path (t9) edge[loop left] (t9);
                \path (t10) edge[loop left] (t10);
                \path (t11) edge[loop right] (t11);
                \path (t12) edge[loop right] (t12);

                \node[draw, dashed, rounded corners, fit=(s1)(s2)(t1)(t2)(t5)(t6)(t7)(t8)(t9)(t10)(t11)(t12), inner xsep=20mm, inner ysep=5mm] {};
                \node[right of=stoch4] (o1) {$o_1~~~~$};

                \node[draw, dashed, rounded corners, fit=(t3)(t4), inner xsep=5mm, inner ysep=5mm] {};
                \node[left of=stoch1] (o2) {$~~~~o_2$};
			\end{tikzpicture}
            \vspace{-2em}
			\caption{An example of a POMDP}
			\label{fig:ex_randomisation}
            \end{figure}

\begin{example}
    \Cref{fig:ex_randomisation} depicts an example of a POMDP.
    In this POMDP, when action $c$ is performed from the state $s_1$, the state $t_3$ is reached with probability $0.1$, and the state $t_4$ with probability $0.9$.
    The dashed boxes indicate that all states have the same observation $o_1$, except the states $t_3$ and $t_4$, with observation $o_2$.
    The double circles indicate states with reward 1; the other ones have reward 0.
\end{example}

\paragraph{Policies.} A {\em policy} in the POMDP $\POMDP$ is a function $\policy: \allObsSeqs({\POMDP})  \to \distribution{\POMDPActions}$.
It is \emph{deterministic} if for every observation sequence $\obsSeq \in \allObsSeqs({\POMDP})$, there is an action $\POMDPAction$ with $\policy(\obsSeq)(\POMDPAction) = 1$.
We denote the set of all policies on $\POMDP$ as $\policies{\POMDP}$, and
the set of deterministic policies as $\detPolicies{\POMDP}$.

Given a POMDP $\POMDP$ and an initial state $\POMDPState \in \POMDPStates$,
each policy $\policy$ defines a probability distribution $\prob^\policy_{\POMDPState}$ over trajectories of length $\horizon$, for each \emph{horizon} $\horizon \in \NN$.
Thus, we can define the expected payoff obtained when following that policy: we write it $\expected_{\POMDPState}^\horizon(\policy)$.
Throughout this paper, we focus on policy behaviors within a well-defined finite horizon $\horizon$: we therefore assume that policies are described only on observations sequences of length bounded by $\horizon$, which uses space $2^{O(\horizon)}$.

\subsection{Beliefs}

A classical tool to study POMDPs is the notion of \emph{belief}: from a known initial state, after having performed a sequence of actions and seen a sequence of observations, the agent can keep track of the probability of being in each state.

\begin{definition}[Belief]
    A \emph{belief} in the POMDP $\POMDP$ is a probability distribution $\belief$ over states, such that  for all $\POMDPState, \POMDPState'$ in $\supp{\belief}$, we have that $\POMDPMapObs(\POMDPState) = \POMDPMapObs(\POMDPState')$.
\end{definition}

We sometimes abuse notations and write $\POMDPMapObs(\belief)$ for the unique observation associated to the belief $\belief$.
When the agent has the belief $\belief$, performs an action $\POMDPAction$ and sees the observation $\obs$, she must update her belief to reflect her knowledge about which state may have now be reached.
We denote that updated belief by $\updateOp(\belief, \POMDPAction, \obs)$, defined for each state $\POMDPState$ by:
$$\updateOp(\belief, \POMDPAction, \obs)(\POMDPState) = 
    \frac{\sum_{\POMDPState' \in \POMDPStates} \belief(\POMDPState') \cdot \POMDPTransF(\POMDPState', \POMDPAction)( \POMDPState)}
    {\sum_{\substack{\POMDPState'' \in \POMDPStates \\   \POMDPMapObs(\POMDPState'')= \obs} } \sum_{\POMDPState' \in \POMDPStates} \belief(\POMDPState') \cdot \POMDPTransF(\POMDPState', \POMDPAction)( \POMDPState'')}$$
if $\POMDPMapObs(\POMDPState) = \obs$, and $\updateOp(\belief, \POMDPAction, \obs)(\POMDPState) = 0$ otherwise.


\subsection{Multi-environment POMDP}

We can now define the model we focus on throughout this paper.

\begin{definition}[Multi-environment POMDP]
    A \emph{multi-environment POMDP} is a pair $\MEPOMDP = \langle \POMDP, (\POMDPState_i)_{i \in [n]} \rangle$, with $n \in \NN$, and where $\POMDPState_i$ is a state of $\POMDP$ for each $i$.
\end{definition}

\begin{remark}[Equivalence with other models]
    There are several ways to define ME-POMDPs.
    A first one would be to consider a single initial state but different transition functions $\delta_1,\dots,\delta_n$. 
    A second one is to consider different reward functions.
    A third one is to consider multiple initial states, or more generally initial beliefs.
    Those three models are introduced in \citep{bovymulti}, under the names \emph{multi-environment}, \emph{multi-objective}, and \emph{adversarial belief POMDPs}, respectively.
    The same paper establishes that all of them are equivalent, hence we chose the technically simplest one, with multiple initial states (i.e., adversarial belief) and refer to it as ME-POMDPs for consistency with the literature of MEMDPs.   
\end{remark}

\subsection{Problem statement}

Given an MEPOMDP $\MEPOMDP$, we define the \emph{value} of $\MEPOMDP$ as $v = \sup_{\policy\in \policies{\POMDP}} \min_{i \in [n]} \expected^\horizon_{s_i}(\policy)$.

\begin{problem}[Value problem] \label{pb:value}
    Given an MEPOMDP $\MEPOMDP$ and a horizon $\horizon$, compute the value $v$.
\end{problem}

We also consider the approximate version of that problem, where we are given a quantity $\epsilon$, and ask for any quantity in the interval $[v - \epsilon, v + \epsilon]$.
For complexity purposes, we also consider the decision version of these problems.

\begin{problem}[Threshold problem] \label{pb:threshold}
    Given an MEPOMDP $\MEPOMDP$, a horizon $\horizon$, and a threshold $\threshold \in \QQ$, do we have $v \geq \lambda$?
\end{problem}

The approximate threshold problem is obtained by considering only instances that come with the guarantee that the quantity $\threshold$ is at distance at least $\epsilon$ from the value---or equivalently, by considering that both answers are acceptable on instances that do not satisfy this guarantee.

\paragraph*{Problem encoding.}
For complexity results, all items in the instance are given with classical encodings, except for the horizon $\horizon$, assumed to be given in unary.
This choice is motivated by the fact that the case $n=1$ was also studied under this hypothesis~\citep{papadimitriou}, and that to our knowledge, the complexity of that problem when $\horizon$ is given in binary remains unknown.
In the rest of this paper, we assume that we are given an instance of this problem, whose total size is denoted by $\InputSize$.

\paragraph*{The need for randomised policies.}
While in POMDPs the value coincides with optimising over $\detPolicies{\POMDP}$, for MEMDPs, randomised policies are more powerful: there exist MEPOMDPs where we have $v > \sup_{\sigma \in \detPolicies{\POMDP}} \expected^\horizon_{s_i}(\policy)$.
This is already known for multi-environment MDPs~\cite{DBLP:conf/fsttcs/RaskinS14}.

\begin{example}
    Consider the POMDP depicted by \Cref{fig:ex_randomisation}, with horizon $1$ and initial states $s_1$ and $s_2$.
    Remember that the double circles indicate a reward $1$, and that the other states give reward $0$.
    With a single initial state, a deterministic policy would be sufficient to maximise the expected payoff: action $b$ from $s_1$, action $a$ from $s_2$.
    However, if the initial vertex can be either $s_1$ or $s_2$, none of those policies is satisfying.
    On the other hand, randomising between actions $a$ and $b$ with probability $0.5$ each guarantees expected payoff $0.5$.
    Better: the policy that randomises uniformly between actions $c$ and $d$ guarantees an expected payoff $0.75$, even though the deterministic policies that play deterministically either $c$ or $d$ are not optimal neither from $s_1$ nor from $s_2$.
\end{example}

In the next section, we define a few conceptual tool that we use to solve this problem.



\section{Multi-belief, multi-expected payoff, and mixtures of policies}\label{sec:tools}
\subsection{Multi-beliefs}

The notion of belief is not sufficient to solve our problem.
Indeed, with an adversarially chosen initial state, the agent must consider separately the $n$ \emph{environments} in which the initial state was $\POMDPState_1, \dots, \POMDPState_n$, respectively; and for each of them, construct a different belief.
When performing actions and seeing observations, the agent can now get information of two kinds: (i) in each environment $i$, she must at each step update her belief; but (ii) some observation sequences may also be incompatible with some initial states, in which case the corresponding environment must be eliminated.
    For example, in the POMDP depicted by \Cref{fig:ex_randomisation}, if after performing action $c$, the agent sees the observation $o_2$, she knows that the initial state was $\POMDPState_1$ and must no longer consider the environment $2$.

\begin{definition}[Multi-belief]
    A \emph{multi-belief} in the MEPOMDP $\MEPOMDP$ is a tuple $\multibelief = (\belief_i)_{i \in [n]}$, where each $\belief_i$ is either a belief or the symbol $\bot$, such that all beliefs $\belief_i \neq \bot$ have the same observation.
\end{definition}

The equality $\belief_i = \bot$ means that the environment $i$ is eliminated, i.e., that the agent learned that $\POMDPState_i$ was not the initial state.
For convenience, we write $\bot(s)=\bot$ for every $s \in S$.
The initial multi-belief is $\multibelief^0$, defined by $\belief^0_i(\POMDPState_i) = 1$ for each $i$.
The update function is extended to multi-beliefs by:
$$\updateOp(\multibelief, \POMDPAction, \obs) = \left(\begin{cases}
    \bot & \text{if } \belief_i = \bot \text{ or } \sum_{\POMDPState \in \POMDPStates} \sum_{\substack{\POMDPStatebis \in \POMDPStates \\ \POMDPMapObs(\POMDPStatebis) = \obs}} \belief_i(\POMDPState) \cdot \POMDPTransF(\POMDPState, \POMDPAction)( \POMDPStatebis) = 0 \\
    \updateOp(\belief_i, \POMDPAction, \obs) & \text{otherwise}
\end{cases}\right)_{i\in [n]}$$

\subsection{Multi-expected payoff}

Similarly, the notion of expected payoff must be replaced by that of \emph{multi-expected payoff}, defined as tuples of real values.
We also allow some coordinates to be set to $\bot$, for policies that are played starting from a multi-belief in which some environments have already been eliminated.
For simplicity, we use the conventions $\bot \cdot 0 = 0$ and $\bot + x = \bot$ and $\bot \cdot x = \bot$ for every other $x$.

\begin{definition}[Multi-expected payoff]
    A \emph{multi-expected payoff} is a tuple in $(\RR \cup \{\bot\})^n$.
    Given a multi-belief $\multibelief$, policy $\policy$, and horizon $\l$, their multi-expected payoff is 
    $\MEP_{\multibelief}^\l(\policy) = \left( \sum_s \beta_i(s) \expected^\l_{s}(\policy)\right)_i$.
\end{definition}

When $\l = \horizon$ is clear from context, we write  $\MEP_{\multibelief}(\policy)$ instead of $\MEP_{\multibelief}^\l(\policy)$.
\Cref{pb:threshold} can now be rephrased as follows: in the POMDP $\POMDP$, does there exists a policy $\policy$ such that $\MEP_{\multibelief^0}(\policy) \in \left[\lambda, +\infty\right)^n$?

\subsection{Mixtures of policies}

Finally, we will often need to define policies as a randomised combination of several policies.

\begin{definition}[Mixture of policies]
    Given a POMDP $\POMDP$, some policies $\policy_1,\cdots, \policy_{\l} \in \detPolicies\POMDP$, and some coefficients $\alpha_1,\cdots, \alpha_\l \in [0,1]$ with $\sum_{i \leq \l} \alpha_i = 1$, we denote by $\sum_{i \leq \l} \alpha_i \policy_i$ the policy $\policy$ that, for each $i$, follows the policy $\policy_i$ with probability $\alpha_i$.
    The policy $\policy$ is then a \emph{mixture} of $\policy_1, \dots, \policy_\l$.
    The \emph{degree} of $\policy$ is the minimal $\l \in \NN$ such that $\policy$ is a mixture of $\l$ deterministic policies.
\end{definition}

\begin{remark}\label{rk_max_degree}
    Every policy has degree at most $|\POMDPActions|^{|\POMDPObs|^\horizon}$.
\end{remark}


\begin{remark}\label{rk:mixed-winning-prob}
    Let $\policy = \sum_{i=1}^{\l}\alpha_i\policy_i$ be a mixture of policies and $\multibelief^0$ be an initial product belief. 
    Then, we have that $\MEP(\policy, \multibelief^0) = \sum_i \alpha_i\cdot \MEP(\policy_i, \multibelief^0)$.
\end{remark}

\section{Complexity}\label{sec:complexity}
In this section, we show that the threshold problem and its approximate version are $\PSPACE$-complete.
The lower bound is a result of~\cite{papadimitriou}, but the upper bound requires new techniques.

First, we note that it is sufficient to consider policies with a degree at most the number of initial states.
This is a consequence of Carathéodory's theorem~\cite{leonard2016geometry}: each policy is a mixture of deterministic policies, and its corresponding multi-expected payoff is a convex combination of their multi-expected payoffs. Since the multi-expected payoff of an arbitrary policy is a point in $\mathbb R^n$, Carathéodory's theorem states that said convex combination can be written using only $n+1$ of the original points.

\begin{restatable}[App.~\ref{app:mixed_policy_existence}]{lemma}{lmMixedPolicyExistence}\label{lemma:mixed_policy_existence}
    Let $\POMDP$ be a POMDP, let $\threshold \in \QQ$, and $\POMDPState_1, \dots, \POMDPState_n$ be $n$ initial states.
    Let us assume that there exists a policy $\policy$ such that from every initial state $\POMDPState_i$, following the policy $\policy$, the expected payoff is at least $\threshold$.
    Then, there exists a policy of degree at most $n$ that offers the same guarantee.
\end{restatable}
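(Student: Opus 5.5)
We plan to apply Carathéodory's theorem to the set of achievable multi-expected payoffs, but to a well-chosen boundary point rather than to $\MEP_{\multibelief^0}(\policy)$ itself, so that we save one point and get $n$ instead of $n+1$.

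\emph{Step 1 (finite polytope).} Since policies only need to be described on observation sequences of length at most $\horizon$, the set $\detPolicies{\POMDP}$ is finite. Let $D = \{\MEP_{\multibelief^0}(\policy') : \policy' \in \detPolicies{\POMDP}\} \subseteq \RR^n$; all coordinates are real, because no environment is eliminated in $\multibelief^0$. Every policy is a mixture of deterministic policies (Remark~\ref{rk_max_degree}). Hence, by Remark~\ref{rk:mixed-winning-prob}, the multi-expected payoff of any policy lies in $P = \Conv(D)$. Conversely, every point of $P$ is the multi-expected payoff of a mixture of the corresponding deterministic policies. So $P$ is a compact convex polytope, and the point $\bar x = \MEP_{\multibelief^0}(\policy)$ satisfies $\bar x \in P$ and $x_i \geq \threshold$ for all $i$.

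\emph{Step 2 (push to the boundary).} Let $\bar 1 = (1,\dots,1)$ and $t^* = \max\{t \geq 0 : \bar x + t\bar 1 \in P\}$. This maximum exists because $P$ is compact and the admissible set of $t$ is a nonempty closed bounded interval. Put $\bar y = \bar x + t^*\bar 1$; then $\bar y \in P$ and $\bar y \geq \bar x$, so every coordinate of $\bar y$ is at least $\threshold$. We then distinguish two cases.
\begin{itemize}
\item If $P$ has affine dimension at most $n-1$, Carathéodory's theorem in the affine hull of $P$ writes $\bar x$ itself as a convex combination of at most $n$ points of $D$.
\item Otherwise $P$ is full-dimensional. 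By maximality of $t^*$, the point $\bar y$ is not in the interior of $P$: an interior point would allow increasing $t$ further. So $\bar y$ lies in a proper face $F$ of $P$. That face is itself the convex hull of the points of $D$ lying in it, and has dimension at most $n-1$. Carathéodory's theorem applied inside the affine hull of $F$ writes $\bar y$ as a convex combination $\sum_{j\leq n} \alpha_j \bar d_j$ with $\bar d_j \in D \cap F$.
\end{itemize}

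\emph{Step 3 (conclude).} Choose deterministic policies $\policy_j$ with $\MEP_{\multibelief^0}(\policy_j) = \bar d_j$, and set $\policy' = \sum_j \alpha_j \policy_j$. By Remark~\ref{rk:mixed-winning-prob}, $\MEP_{\multibelief^0}(\policy') = \bar y$ (or $\bar x$ in the first case), which dominates $(\threshold,\dots,\threshold)$. The degree of $\policy'$ is at most $n$.

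The main obstacle is the geometric point in Step 2: justifying that a boundary point of a full-dimensional polytope lies in a face of dimension at most $n-1$ spanned by points of $D$. We will use the standard fact that faces of $\Conv(D)$ are convex hulls of subsets of $D$, taking $F$ to be a facet of $P$ containing $\bar y$.
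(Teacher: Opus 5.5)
Your proof is correct and uses the same two ingredients as the paper: Carath\'eodory's theorem, and the observation that pushing in direction $\bar 1$ (the paper's ``maximise $\min_i x_i$'') forces the point onto a facet, which is spanned by at most $n$ points. The only difference is the order: the paper applies Carath\'eodory first to get a simplex on $n+1$ points and then takes a facet of that simplex, whereas you take a facet of the whole polytope $\Conv(D)$ and apply Carath\'eodory inside it; you also treat the lower-dimensional case explicitly, which the paper leaves implicit.
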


This result should convey the intuition of why value computation in MEPOMDPs is not fundamentally harder than in POMDPs: although deterministic strategies are not sufficient here, \Cref{lemma:mixed_policy_existence} shows that only a limited amount of randomisation is necessary.

From this result, a first naive algorithm can be proposed.
By \Cref{lemma:mixed_policy_existence}, to recognize positive instances of \Cref{pb:threshold}, it suffices to guess $n$ deterministic policies $\policy_1, \dots, \policy_n$ and to check that they can be mixed into a satisfying policy, i.e., that we have $\Conv\{\MEPPoint^1, \dots, \MEPPoint^n\} \cap [\threshold, 1]^n \neq \emptyset$.
Every deterministic policy can be described with the list of possible observation sequences, and for each of them, the action that is performed.
Such a description requires exponential space.
Once those policies are guessed, computing the associated multi-expected payoffs amounts to computing the expected payoffs in $n$ Markov chains, which can be done in time polynomial in the size of the Markov chains, i.e., exponential in the size of the original POMDP.
Checking whether we have $\Conv\{\MEPPoint^1, \dots, \MEPPoint^n\} \cap \left[\threshold, +\infty\right)^n \neq \emptyset$ can then be done in polynomial time.
This algorithm shows therefore membership to the class $\NEXPTIME$.
To obtain a better upper bound, we note that we do not need the whole description of each deterministic policy, but just their multi-expected payoff, which we can represent using only polynomial space, as formalized in~\Cref{lem:exponentialpoints}.
Therefore, we can guess policies one action at a time, and propagate the multi-expected payoffs instead of storing the whole policy.
Such an algorithm also requires the following lemma.

\begin{restatable}[App.~\ref{app:exponentialpoints}]{lemma}{lmExponentialPoints}
\label{lem:exponentialpoints}
    Let $\MaxReward = \max_{\POMDPState \in \POMDPStates} |\reward(\POMDPState)|$.
    Let  $X=\{ \MEP_{\multibelief^0}(\policy)   \:\mid\: \policy\in \detPolicies{\POMDP}  \}$.
    There exists a polynomial $\pi$ and an integer $\mathcal C\leq 2^{\pi(\InputSize)}$ such that the set $N=  \left\{ p/\mathcal C \:\left|\: p\in \left\{ - \mathcal C k\MaxReward, \dots, \mathcal C k\MaxReward\right.\right\} \right\}$ contains $X$. Thus, $N$ can be represented using $\pi(\InputSize)+2\InputSize+1$ bits.
\end{restatable}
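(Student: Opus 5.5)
We will show that every coordinate of a multi-expected payoff of a deterministic policy is a rational number whose denominator divides a fixed integer $\mathcal C$ of polynomial bit-size. The plan is to take $\mathcal C$ to be the product, over all transition probabilities, of their denominators raised to the power $\horizon$. Concretely, let $D$ be the least common multiple of the denominators of all probabilities $\POMDPTransF(\POMDPState,\POMDPAction)(\POMDPStatebis)$ appearing in the input. We set $\mathcal C = D^{\horizon}$.

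\emph{Bit-size of $\mathcal C$.} Since each denominator is written in binary in the input, $D \le 2^{\InputSize}$, because the product of all denominators has bit-length at most the total input size. Since $\horizon$ is given in unary, $\horizon \le \InputSize$. Hence $\mathcal C \le 2^{\InputSize^2}$, and we may take $\pi(\InputSize)=\InputSize^2$.

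\emph{Expressing each coordinate.} Fix a deterministic policy $\policy$ and an index $i$. The initial multi-belief puts mass $1$ on $\POMDPState_i$. Since $\policy$ is deterministic, the probability of any length-$\horizon$ trajectory $\POMDPTraj=(\POMDPState_0,\POMDPAction_0,\dots,\POMDPState_\horizon)$ under $\prob^\policy_{\POMDPState_i}$ is either $0$ or the product $\prod_{\l<\horizon}\POMDPTransF(\POMDPState_\l,\POMDPAction_\l)(\POMDPState_{\l+1})$. Here $\POMDPAction_\l$ is determined by $\policy$ applied to the observation prefix. This product is a multiple of $1/D^{\horizon}$. We then write
$\expected^\horizon_{\POMDPState_i}(\policy)=\sum_{\POMDPTraj}\prob^\policy_{\POMDPState_i}(\POMDPTraj)\,\payoff(\POMDPTraj)$.
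Each payoff is an integer of absolute value at most $(\horizon+1)\MaxReward$. So the sum is an integer multiple $p/\mathcal C$ of $1/\mathcal C$.

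\emph{Range of $p$.} The probabilities sum to $1$, so $|p/\mathcal C|\le(\horizon+1)\MaxReward$. There is an off-by-one issue: the paper sums rewards over $\l\le\horizon$, giving $\horizon+1$ terms. I would either absorb this into the statement's range by noting the paper intends $\horizon\MaxReward$ up to this constant, or adjust the bound to $\mathcal C(\horizon+1)\MaxReward$. Either choice does not affect the bit count asymptotically.

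\emph{Representation size.} Encoding $p$ needs about $\log(\mathcal C)+\log(\horizon\MaxReward)+1$ bits, which is at most $\pi(\InputSize)+2\InputSize+1$, using $\log\horizon,\log\MaxReward\le\InputSize$.

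There is no real obstacle beyond bookkeeping. The only subtle step is the observation that determinism makes every trajectory probability a plain product of transition probabilities, with no policy-induced fractions. Randomised policies would not fit in the set $N$, which is why the lemma is stated only for $\detPolicies{\POMDP}$.
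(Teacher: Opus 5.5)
Your proof is correct and follows essentially the same route as the paper's. Take $\mathcal C$ to be the $\horizon$-th power of the lcm of the transition denominators, and use determinism of the policy so that every trajectory probability is an integer multiple of $1/\mathcal C$, paired with an integer payoff. Your direct bound of the lcm by $2^{\InputSize}$ is sharper than the paper's estimate via $M^{|\POMDPStates|^2|\POMDPActions|}$, giving $\pi(\InputSize)=\InputSize^2$ instead of $\InputSize^5$. You are also right that a payoff has $\horizon+1$ reward terms, so the range should be $\mathcal C(\horizon+1)\MaxReward$; the paper's proof makes the same slip, and correcting it leaves the bit count unaffected.
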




\begin{figure}[H]
\centering

\begin{minipage}[t]{0.48\textwidth}
\vspace{0pt}
We present now a non-deterministic polynomial-space algorithm to solve our problem, illustrated by \Cref{fig:illu_algo_npspace}.
The core idea is as follows: instead of guessing deterministic policies as a whole, we guess them action by action, in a depth-first fashion.
Then, each time a subpolicy (all actions planned after a given observation sequence) has been entirely guessed, the corresponding multi-expected payoff is computed, and the subpolicy is erased.
This algorithm is described precisely in \Cref{algo:npspace}.
The following result follows from this algorithm.

\begin{restatable}[App.~\ref{app:pspaceeasy}]{lemma}{lmPspaceEasy}\label{lm:pspaceeasy}
    The threshold problem and its approximate version are in $\PSPACE$.
\end{restatable}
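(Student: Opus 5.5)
We give an $\NPSPACE$ algorithm and conclude with Savitch's theorem ($\NPSPACE = \PSPACE$). By \Cref{lemma:mixed_policy_existence}, the instance is positive if and only if there are deterministic policies $\policy_1,\dots,\policy_n$ and coefficients $\alpha_j$ with $\sum_j \alpha_j \MEP_{\multibelief^0}(\policy_j) \in [\threshold,+\infty)^n$; this identity uses \Cref{rk:mixed-winning-prob}. The algorithm guesses the $n$ deterministic policies one after another. It keeps only their multi-expected payoff vectors $\MEPPoint^1,\dots,\MEPPoint^n$. By \Cref{lem:exponentialpoints}, each coordinate lies in the grid $N$, so the $n^2$ numbers fit in polynomial space. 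Finally it checks in polynomial time, by linear programming over the rational coordinates, whether $\Conv\{\MEPPoint^1,\dots,\MEPPoint^n\}$ intersects $[\threshold,+\infty)^n$.

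The core step is computing $\MEP_{\multibelief^0}(\policy)$ of a guessed deterministic policy in polynomial space. We use the recursion on multi-beliefs. For a multi-belief $\multibelief$ and remaining horizon $\l$, the policy picks an action $a$ (guessed). Then
\[
\MEP^{\l}_{\multibelief}(\policy)_i = \sum_s \belief_i(s)\reward(s) + \sum_{\obs} p_i(\obs)\, \MEP^{\l-1}_{\updateOp(\multibelief,a,\obs)}(\policy_{a\obs})_i ,
\]
where $p_i(\obs)$ is the probability, in environment $i$, of seeing $\obs$ after playing $a$ from $\belief_i$. Coordinates with $\belief_i=\bot$ follow the $\bot$ conventions.

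The algorithm runs this recursion depth-first, as a DFS over observation sequences of length at most $\horizon$. At each node it guesses one action. It then iterates over the observations $\obs$ one at a time, recursing on each and adding the weighted result to a running accumulator. Each completed subpolicy is discarded immediately. The recursion stack has depth $\horizon$, which is polynomial because $\horizon$ is in unary. Each frame stores a multi-belief, an action, the current observation index and an accumulator vector.

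The main obstacle is bounding the bit size of these intermediate numbers. Beliefs after $\l$ steps are ratios of sums of products of $\l$ transition probabilities. Representing each $\belief_i$ unnormalised, as the vector of path-probability masses, makes entries products of at most $\horizon$ input rationals summed over at most $|\POMDPStates|^{\horizon}$ paths. Their numerators and denominators therefore have polynomially many bits, since the denominators divide a common product of at most $\horizon$ input denominators. Working with these unnormalised vectors also avoids the division in $\updateOp$. The weighted sum then directly gives the unnormalised contribution. The accumulators are partial sums of the final value and lie over the same common denominator $\mathcal C$ from \Cref{lem:exponentialpoints}, so they stay polynomial in size.

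Exactness of the arithmetic means the exact threshold problem is solved. The approximate version is a restriction of the exact problem, with a promise on the instance, so it is in $\PSPACE$ as well.
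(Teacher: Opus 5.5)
Your proposal is correct and follows essentially the paper's route. Using \Cref{lemma:mixed_policy_existence}, you give an $\NPSPACE$ procedure (Algorithm~\ref{algo:npspace}) that guesses $n$ deterministic policies action by action in a depth-first traversal of observation sequences of length at most $\horizon$, keeps only their multi-expected payoffs, checks the convex-hull condition, and concludes via $\NPSPACE=\PSPACE$; your unnormalised beliefs over the common denominator $\mathcal C$ and the polynomial-time LP check just spell out what the paper compresses into the remark that the stored multi-belief grows linearly with $\l$.
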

\end{minipage}
\hfill
\begin{minipage}[t]{0.48\textwidth}
\vspace{0pt}
\centering
\begin{tikzpicture}[scale=0.7, decorate,
        decoration={brace,amplitude=10pt}, node distance=0.5mm]
        \node[state,minimum size=10pt,fill, top color=white, bottom color=black!20] (s0) at (0,1) {};
        \draw (0,1) node[right] {$~\multibelief^0$};
        \node[stoch] (stoch0) at (0,-1) {};
        \path[->] (s0) edge node[left] {\scriptsize{(1) Guess $\POMDPAction$}} (stoch0);
        
        \node[state,minimum size=10pt,fill, top color=white, bottom color=black!20] (s1) at (-1.5,-3) {};
        \draw (-1.5,-3) node[right] {$~\multibelief^{\obs_1}$};
        \node[state,minimum size=10pt,fill, top color=white, bottom color=black!20] (s2) at (0,-3) {};
        \node (dots1) at (1.5,-3) {\dots};
        \path[->] (stoch0) edge node[above left] {$\obs_1$} (s1);
        \path[->] (stoch0) edge node[right] {$\obs_2$} (s2);
        \draw (-3, -1.5) node {\scriptsize{\shortstack{(2) Enumerate\\all observations}}};
        \draw (-3.5, -3) node {\scriptsize{\shortstack{(3) Compute the\\updated belief}}};

        \draw[dashed] (s1) -- (-2,-5) -- (-1,-5) -- (s1);
        \draw (-1.5, -4.7) node {$\MEPPoint^{\obs_1}$};
        \draw (-3.5, -4.5) node {\scriptsize{(4) Recursive call}};

        \draw[decorate]
        (1.8,1.3) -- (1.8,-5)
        node[midway, right=10pt, align=left] {\scriptsize{\shortstack{(5) Compute the aggregated\\multi-expected payoff,\\return it, and erase $\POMDPAction$}}};
		\end{tikzpicture}
        \vspace{-1em}
	    \caption{An illustration of \Cref{algo:npspace}} \label{fig:illu_algo_npspace}
\end{minipage}
\end{figure}

\begin{algorithm}[t]
            \begin{algorithmic}[1]\caption{A non-deterministic polynomial-space algorithm}\label{algo:npspace}
                \Procedure{ConstructPolicy}{$\POMDP, (\POMDPState_i)_{i \in [n]}, \horizon$}
                    \State $\multibelief^0 \gets \bigl(\mathbf{1}_{\{s = \POMDPState_i\}}\bigr)_{i \in [n]}$
                    \For{$i \in \{1, \dots, n\}$}
                        \State $\MEPPoint^i \gets \ConstructDetPolicy(\POMDP, \horizon, \multibelief^0)$
                    \EndFor
                    \State\Return{$\Conv\{\MEPPoint^1, \dots, \MEPPoint^n\} \cap \left[\lambda, +\infty\right)^n \neq  \emptyset$}
                \EndProcedure\\

                \Procedure{ConstructDetPolicy}{$\POMDP, \l, \multibelief$}
                    \If{$\l = 0$}
                        \State \Return{$\left(\sum_{\POMDPState \in \POMDPStates} \belief_i(\POMDPState) \reward(\POMDPState)\right)_{i \in [n]}$}
                    \EndIf
                        \State Guess $\POMDPAction \in \POMDPActions$
                        \Comment{(1)}
                        \For{$\obs \in \POMDPObs$}
                        \Comment{(2)}
                            \State $\multibelief^\obs \gets \updateOp(\multibelief, \POMDPAction, \obs)$
                            \Comment{(3)}
                            \State $\MEPPoint^\obs \gets \ConstructDetPolicy(\POMDP, \l-1, \multibelief^\obs)$
                            \Comment{(4)}
                        \EndFor
                        \State $\MEPPoint \gets \left(
                        \sum_{s\in\POMDPStates} \belief_i(s) \left( \reward(s) + \sum_{o\in \POMDPObs}\sum_{t\in S, \POMDPMapObs(t)=o} \POMDPTransF(s,a)(t)\cdot \bar{x}^o_i\right)\right)_{i\in [n]}
                        $
                        \State \Return{$\MEPPoint$}
                        \Comment{(5)}
                        \label{xupdate}
                \EndProcedure
            \end{algorithmic}
        \end{algorithm}

Finally, the matching lower bound for the exact problem follows from \citep{papadimitriou} where it was shown for $n=1$.
We briefly show that the proof can be adapted to show the same hardness for the approximate version.
We can then conclude on our main theoretical result.

\begin{restatable}[App.~\ref{app:hardness_approx}]{lemma}{lmHardnessApprox}\label{lm:hardness_approx}
    The approximate threshold problem is $\PSPACE$-hard, even when $n=1$.
\end{restatable}

\begin{theorem}
    The threshold problem and its approximate version are $\PSPACE$-complete.
\end{theorem}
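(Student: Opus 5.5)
The theorem follows by combining the two directions already established. For membership, I would invoke \Cref{lm:pspaceeasy}, which places both the threshold problem and its approximate version in $\PSPACE$. For hardness, I would invoke \Cref{lm:hardness_approx}, which shows the approximate threshold problem is $\PSPACE$-hard already for $n=1$. I would also use the classical result of \citep{papadimitriou}, which shows the exact threshold problem is $\PSPACE$-hard for POMDPs, i.e.\ for $n=1$.

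I would carry out the steps in the following order.

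\textbf{Reduction to MEPOMDPs.} I would note that a POMDP with a single initial state is exactly an MEPOMDP with $n=1$. The value $\sup_\policy \min_{i\in[1]} \expected^\horizon_{s_1}(\policy)$ then coincides with the POMDP value, so hardness for $n=1$ transfers immediately to the general problem. Here I must check two things: that the encoding conventions agree, in particular that the horizon is in unary in both settings, and that the reward model used in \citep{papadimitriou} fits ours. The second point is justified by \Cref{rmk:equivalentpomdps}.

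\textbf{Exact threshold problem.} Combining the upper bound from \Cref{lm:pspaceeasy} with the lower bound from \citep{papadimitriou} gives $\PSPACE$-completeness of the exact threshold problem.

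\textbf{Approximate threshold problem.} This is a promise problem. Membership holds because any algorithm solving the exact problem in polynomial space also answers correctly on every instance satisfying the promise, and \Cref{lm:pspaceeasy} states this membership explicitly anyway. Hardness is exactly \Cref{lm:hardness_approx}.

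The only genuine subtlety is the hardness of the approximate version. Papadimitriou and Tsitsiklis' reduction from QBF must produce a value gap that stays bounded away from the threshold by at least $\epsilon$. I expect the natural reduction to give value exactly $1$ on true formulas and strictly less on false ones, possibly with a gap that is only inverse-exponential. That difficulty, however, is encapsulated in \Cref{lm:hardness_approx}, which I may take as given. The proof of the theorem itself is therefore a direct combination of the cited lemmas.
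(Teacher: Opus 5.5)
Your proposal is correct and matches the paper's argument: membership of both versions comes from \Cref{lm:pspaceeasy}, hardness of the exact problem from the $n=1$ result of \citep{papadimitriou}, and hardness of the approximate version from \Cref{lm:hardness_approx}. Your guess about the shape of the gap differs slightly from what the paper uses: it takes threshold $0$ and observes that false formulas give expected payoff at most $-(1/2)^{p/2}$, so $\epsilon$ is inverse-exponential but has a polynomial-size encoding. Since you take that lemma as given, this does not affect your proof.
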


\section{Algorithms}\label{sec:algorithms}
We have proved \Cref{pb:threshold} to be $\PSPACE$-complete by providing a non-deterministic algorithm. 
In this section, we present two deterministic algorithms, one that optimises space, and another one that optimizes efficiency in practice for tractable instances, even if it may use exponential time and space in the worst case.

\subsection{A space-efficient deterministic algorithm}
In this section, we assume an enumeration $\POMDPObs = \{o_1, \dots, o_{|\POMDPObs|}\}$.
\Cref{algo:pspace} is a space-efficient determinisation of \Cref{algo:npspace}. Its rationale is as follows.
The $\main$ procedure enumerates all possible multi-expected payoff values $\bar X = (\bar x^1,\dots, \bar x^n)$ achievable by $n$ deterministic policies, mimicking the non-deterministic guess of points in $\ConstructPolicy$.
    The $\CheckAchievableValue$ procedure checks whether the value $\bar x$ is achievable as a 
    multi-expected payoff
    in the POMDP $\POMDP$, from a multi-belief $\multibelief$ and a horizon of $\l$ steps. Here, achievable means that there exists a deterministic policy $\sigma$ such that $\expected^\horizon_{\POMDPState_i}(\policy) = \bar x$. 
    The procedure $\CheckAchievableValue$, in line with $\ConstructDetPolicy$, checks whether there is a policy that achieves a multi-expected payoff $\bar x$ from the multi-belief $\bar \beta$ in $\l$ steps by checking, for every pair $(\POMDPAction, \obs)\in \POMDPActions\times\POMDPObs$
    the multi-expected payoffs that can be achieved with a horizon $\l-1$ and a multi-belief $\bar\beta^\obs=\updateOp(\bar\beta, \POMDPAction, \obs)$, and aggregating them according to the Bellman update (Eq.~\eqref{eq:bellmanupdate}).
    To do so, it constructs partial sums of Eq.~\ref{eq:bellmanupdate} recursively, checking, for every $j\in \{1,\dots, |\POMDPObs|\}$ and every possible value $\bar x_{\mathsf{rem}}\in X^n$, whether $\bar x_{\mathsf{rem}}$ can be obtained as the Bellman update partial sum from $o_j$ onward, i.e., whether it can be achieved as a multi-expected payoff in a horizon of $\l-1$ steps from the updated multi-belief $\updateOp(\multibelief,a,o_j)$, and for those values where it is possible, whether there is a way to fill up the remaining $|\POMDPObs|-j+1$ terms in the partial sum to make it up to $\bar x_{\mathsf{rem}}$.\footnote{Note that when $\bar{x}_\mathsf{rem}$ is updated, for each $i$ such that $x_i = \bot$, we have $y_i = 0$ and the coordinate $x_{\mathsf{rem}i}$ is left unchanged.}



\begin{algorithm}
            \begin{algorithmic}[1]\caption{A deterministic polynomial-space algorithm}\label{algo:pspace}
                \Procedure{Main}{$\POMDP, (\POMDPState_i)_{i \in [n]}, \horizon$}
                    \State $\multibelief^0 \gets \bigl(\mathbf{1}_{\{s = \POMDPState_i\}}\bigr)_{i \in [n]}$
                    \For{$\bar X = (\bar x^1,\dots, \bar x^n)\in N^{n^2}$}
                    \State $\isXpoint\gets \forall i\in [n],  \CheckAchievableValue(\POMDP, k, \multibelief^0, \bar x^i)$
                    \If{$\isXpoint \:\land\: (\Conv\{\MEPPoint^1, \dots, \MEPPoint^n\} \cap [\lambda, +\infty)^n \neq \emptyset) $}
                    \Return{$\True$}
                    \EndIf
                    \EndFor
                    \State \Return{$\False$}
                \EndProcedure\\

                \Procedure{CheckAchievableValue}{$\POMDP, \l, \multibelief, \bar x$}
                 \If{$\l = 0$}
                        \Return{ $\bar x =\left(\sum_{\POMDPState \in \POMDPStates} \reward(\POMDPState)\belief_i(\POMDPState)\right)_{i\in [n]}$}
                    \EndIf
                    \For{$a\in A$}
                        \If{$\auxFunc(1,\l,\multibelief,a, \bar x)$} \Return{$\True$}
                        \EndIf
                    \EndFor
                    \State\Return{$\False$}
                \EndProcedure\\

                \Procedure{\auxFunc}{$j, \l, \multibelief, a,\bar x_{\mathsf{rem}}$}
                    
                    \If{$j=|\POMDPObs|+1$}
                    \Return{$\bar x_{\mathsf{rem}}=\left(\sum_{\POMDPState \in \POMDPStates} \reward(\POMDPState)\belief_i(\POMDPState)\right)_{i\in [n]}$}
                    \EndIf
                    \For{$\bar x\in (N \cup \{\bot\})^n$}
                    \If{$\CheckAchievableValue(\POMDP, \l-1, \updateOp(\multibelief, a,o_j), \bar x)$}
                    \State $\bar y \gets \left(
                            \sum_{\POMDPState \in \POMDPStates}  
                            \sum_{\POMDPStatebis \in \POMDPStates,\: \POMDPMapObs(\POMDPStatebis) = \obs_j} \belief_i(\POMDPState) \cdot \POMDPTransF(\POMDPState, \POMDPAction)(\POMDPStatebis) \cdot x_i \right)_{i\in [n]}
                        $
                    \If{$\auxFunc(j+1, \l, \multibelief, a, \bar x_{\mathsf{rem}}-\bar y)$}
                    \Return{$\True$}
                    \EndIf
                    \EndIf
                    \EndFor
                    
                    \State\Return{$\False$}
                \EndProcedure

    \end{algorithmic}
\end{algorithm}

                    
                    
                    

\begin{restatable}[App.~\ref{app:space-pspace}]{theorem}{thmSpacePspace}\label{thm:space-pspace}
    \Cref{algo:pspace} decides \Cref{pb:threshold} and uses $O(|\POMDPStates|\cdot |\POMDPObs|\cdot n \cdot \horizon \cdot  \pi(\InputSize))$ space.
\end{restatable}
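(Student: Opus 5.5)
The proof has two parts: correctness, and a bound on the space used. For correctness, I will prove by induction on $\l$ the following invariant. $\CheckAchievableValue(\POMDP,\l,\multibelief,\bar x)$ returns $\True$ if and only if some deterministic policy $\policy$ satisfies $\MEP^\l_{\multibelief}(\policy)=\bar x$, with coordinates equal to $\bot$ exactly where $\belief_i=\bot$. For the same $\l$, I prove simultaneously, by downward induction on $j$, that $\auxFunc(j,\l,\multibelief,a,\bar x_{\mathsf{rem}})$ returns $\True$ if and only if there exist achievable points $\bar x^{o_j},\dots,\bar x^{o_{|\POMDPObs|}}$ from the updated multi-beliefs at horizon $\l-1$ such that $\bar x_{\mathsf{rem}}$ equals the immediate-reward term plus the weighted sum $\sum_{j'\ge j}\bar y^{o_{j'}}$.

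The base case $\l=0$ is immediate, since the expected payoff is just the current reward under $\belief_i$. In the inductive step I will use the Bellman decomposition already used in line~\ref{xupdate} of \Cref{algo:npspace}. A deterministic policy from $\multibelief$ at horizon $\l$ amounts to a choice of first action $a$ together with, for each observation $o$, an independent deterministic subpolicy from $\updateOp(\multibelief,a,o)$ at horizon $\l-1$. Its multi-expected payoff is the reward term plus $\sum_o \bar y^o$. Independence of the subpolicies across observations is what allows the sum to be checked one observation at a time, as $\auxFunc$ does.

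Completeness of the enumeration over $(N\cup\{\bot\})^n$ relies on \Cref{lem:exponentialpoints}. I need its statement to hold for every reachable multi-belief and every horizon $\l\le\horizon$, and not only at $\multibelief^0$. I expect the common denominator argument from the appendix to give this, because all the quantities involved are products of at most $\horizon$ transition probabilities and normalisation factors. I also need care with the weights: $\bar y$ multiplies $x_i$ by the probability of observing $o_j$, which cancels the normalisation of the updated belief. The conventions $\bot\cdot 0=0$ handle environments that have been eliminated.

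Correctness of $\main$ then follows from \Cref{lemma:mixed_policy_existence} and Remark~\ref{rk:mixed-winning-prob}. A satisfying policy exists if and only if there are $n$ deterministic policies whose points have convex hull meeting $[\lambda,+\infty)^n$, and $\main$ enumerates exactly such tuples of achievable points. The final convex hull test is a linear program in polynomial-size numbers, so it runs in polynomial time and hence polynomial space.

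For the space bound, the recursion depth is at most $\horizon\cdot(|\POMDPObs|+1)$: each decrease of $\l$ passes through at most $|\POMDPObs|$ nested calls of $\auxFunc$. Each stack frame stores a multi-belief, the current candidate $\bar x$, $\bar x_{\mathsf{rem}}$, and the loop counters. The point vectors cost $n\cdot\pi(\InputSize)$ bits by \Cref{lem:exponentialpoints}.

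The main obstacle is the multi-belief, which has $n|\POMDPStates|$ rational entries whose bit size must stay polynomial. The claimed bound has no $n|\POMDPStates|$ factor per frame beyond the depth, so I would avoid storing beliefs explicitly in each frame. Instead I would store unnormalised beliefs, meaning probabilities of the observation prefix, with a common denominator bounded by $2^{\pi(\InputSize)}$, or recompute them from the stored action/observation path. This gives roughly $|\POMDPStates|\cdot n\cdot\pi(\InputSize)$ bits per level over $\horizon|\POMDPObs|$ levels, matching $O(|\POMDPStates|\cdot|\POMDPObs|\cdot n\cdot\horizon\cdot\pi(\InputSize))$. The $n^2$ points in $\main$ are additive and lower order.
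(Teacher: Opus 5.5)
\textbf{Gap: intermediate values do not lie in $N$.} Your proof follows the paper's route: the decomposition into a first action plus independent subpolicies (checked one observation at a time by \auxFunc{}), the lexicographic $(\l,j)$ recursion of depth $O(|\POMDPObs|\horizon)$, and frames that each store one multi-belief and a few points. Your induction is also more explicit than the paper's one-line appeal to \Cref{algo:npspace}. But the step you flag and then wave through is false, and the argument breaks there.

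\auxFunc{} tests candidates $\bar x\in(N\cup\{\bot\})^n$ against the \emph{normalised} multi-belief $\updateOp(\multibelief,a,o_j)$. It then reweights by the probability of $o_j$ when computing $\bar y$. So the values it must find below the root are \emph{conditional} expectations given the observation prefix. Normalisation divides by the prefix probability, whose numerator is an arbitrary integer, so no power of $C$ clears the resulting denominator.

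Here is a concrete instance. Take $n=1$, $\horizon=1$, and a single action $a$ from $s$ (reward $0$). It leads to $t_1$ with probability $1/2$ and $t_2$ with probability $1/4$ (both with observation $o$), and to $t_3$ with probability $1/4$ (observation $o'$). Set $r(t_1)=1$ and $r(t_2)=r(t_3)=0$.
\begin{itemize}
\item Then $C=\mathcal C=4$, and the value is $1/2\in N$.
\item After $o$ the belief is $(2/3,1/3)$, so the only value \CheckAchievableValue{} accepts at level $0$ is $2/3\notin N$.
\item Hence \auxFunc{} fails for the only action, and \Cref{algo:pspace}, read literally with $N$ from \Cref{lem:exponentialpoints}, rejects the positive instance with $\threshold=1/2$.
\end{itemize}
So the extension of \Cref{lem:exponentialpoints} you expect cannot be proved, and the algorithm as stated is not correct.

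\textbf{How to repair it.} Use the device you introduce only for space reasons, but apply it to the values as well. Run the recursion on unnormalised multi-beliefs $u_i(s)$, the probability from $s_i$ of the observation prefix together with current state $s$. Every intermediate value is then a joint expectation $\sum_s u_i(s)\,\expected^{\l}_s(\cdot)$. Its denominator divides $\mathcal C=C^{\horizon}$ and its magnitude is bounded as in \Cref{lem:exponentialpoints}, so it lies in $N$. The reweighting in $\bar y$ disappears ($\bar y=\bar x$), and an eliminated environment is simply $u_i\equiv 0$.

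Alternatively, keep normalised beliefs but enumerate all fractions $p/q$ with $1\le q\le\mathcal C$ and $p$ bounded accordingly. This works because the prefix probability has numerator at most $C^{\horizon-\l}$, so conditional values have denominators at most $\mathcal C$.

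Both fixes keep every number at $O(\pi(\InputSize))$ bits, so the space bound survives. In either case you are proving a slightly modified algorithm and should say so. The paper's own proof has the same hole, since it simply asserts that \auxFunc{} ranges over all relevant values in $N$.

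\textbf{Minor point on the space bound.} You misread the bound. $O(|\POMDPStates|\cdot|\POMDPObs|\cdot n\cdot\horizon\cdot\pi(\InputSize))$ is exactly $|\POMDPObs|\horizon$ frames times $|\POMDPStates|\,n\,\pi(\InputSize)$ bits for an explicitly stored multi-belief. That is how the paper counts, so you do not need to avoid storing beliefs in each frame.
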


\begin{remark}
    The space requirement of \Cref{algo:npspace} is $O(|\POMDPStates|\cdot n \horizon \pi(\InputSize))$, so the space requirement of the standard determinization using Savitch's theorem~\cite{SAVITCH1970177} would be $O(|\POMDPStates|^2\cdot n^2 \horizon^2 \pi(\InputSize)^2)$, significantly larger than that of \Cref{algo:pspace}.
\end{remark}

Algorithm~\ref{algo:pspace} is thus efficient in terms of space.
However, this comes at the cost of several imbricated loops over all values in $N$, which results in practice in unfeasibly large computation times for most instances.
For that reason, we now propose a second method that uses exponential space in the worst case, but is much more time-efficient in practice.


\subsection{Practically efficient algorithm}
\label{sec:efficient_alg}

Algorithm~\ref{algo:efficient} implements a different approach: computing the set of multi-expected payoffs bottom-up for increasing horizons, and pruning, at each step, the points that are dominated by other points in the set, corresponding to families of policies that do not need to be explored.

Consider the set $X^\l(\multibelief) = \{ \MEP_{\multibelief}^\l(\policy)  \:\mid\: \policy\in \detPolicies{\POMDP}   \}$.
We can construct $X^\l$ recursively as:
\begin{equation}
\label{eq:Xrecursive}
    X^\l(\multibelief) = \Bigg\{  \Bigg(\sum_{s\in S} \belief_i(s) \bigg( \reward(s) + \sum_{o \in \POMDPObs} \hspace{-0.5em} \sum_{\substack{t\in S,\\\POMDPMapObs(t)=o}} \hspace{-0.5em}   \POMDPTransF(s,a)(t) x^o_i\bigg)\Bigg)_{i}
    \:\Bigg|\:
    \begin{matrix}
        \bar x^o \in X^{\l-1}(\multibelief^{a,o}),\\
        \multibelief^{a,o} = \updateOp(\multibelief, \POMDPAction, \obs),
        \\
        a\in\POMDPActions
    \end{matrix}
     \Bigg\}
\end{equation}




As a result of Lemma~\ref{lemma:mixed_policy_existence}, solving \Cref{pb:threshold} is equivalent to checking whether $\Conv(X^\horizon(\multibelief^0))\cap [\lambda,\infty)^n\neq \emptyset$.
Therefore, it is sufficient to keep only a representation of the convex hull of $X^\horizon(\multibelief^0)$.
Since we are interested in the intersection with $[\lambda,\infty)^n$, we can also do without any point convexly dominated by points in $X^\l(\multibelief^0)$.
Also, when building $X^\l(\multibelief)$ from corresponding $X^{\l-1}(\multibelief')$ following Eq.~\ref{eq:Xrecursive}, we can prune all dominated points in an $\l-1$ depth before building the set for depth $\l$.

\begin{algorithm}
            \begin{algorithmic}[1]\caption{An efficient algorithm}\label{algo:efficient}
                \Procedure{BuildFrontier}{$\l, \multibelief$}
                \If{$\l=0$}
                \Return{$\{\left(\sum_{\POMDPState \in \POMDPStates} \reward(\POMDPState)\belief_i(\POMDPState)\right)_{i\in [n]}\}$}
                \EndIf
                \State $V\gets \emptyset$
                \For{$a\in \POMDPActions$}
                \For{$o\in\POMDPObs$}
                \State $V_{o} \gets \buildFrontier(\l-1, \updateOp(\multibelief,a,o))$
                \EndFor
                \For{$(\bar x^o)_{o\in \POMDPObs}\in \prod_{o\in \POMDPObs} V_o $}
                \State $\bar y \gets  \left(\sum_{s\in\POMDPStates} \belief_i(s) \left( \reward(s) + \sum_{o\in \POMDPObs}\sum_{t\in S, \POMDPMapObs(t)=o} \POMDPTransF(s,a)(t)\cdot \bar{x}^o_i\right)\right)_{i\in [n]}$
                \State $V\gets \prune(V,\bar y)$
                \EndFor
                \EndFor
                \State \Return $V$
                \EndProcedure\\
                \Procedure{Prune}{$V, \bar y$}
                \For{$\bar x\in V$}
                \If{$\bar y$ is dominated by $\bar x$}
                \Return $V$
                \EndIf
                \If{$\bar x$ is dominated by $\bar y$}
                $V\gets V\setminus \{\bar x\}$
                \EndIf
                \EndFor
                \State \Return{$V\cup \{\bar y\}$}
                \EndProcedure
            \end{algorithmic}
\end{algorithm}

\section{Experimental Evaluation}\label{sec:experiments}

In our experimental evaluation, we study the practical efficiency and scalability of our efficient method (Algorithm~\ref{algo:efficient}) and compare it to the state of the art. 
We tackle two research questions.


\hspace{1em} \textbf{(RQ1) Baseline comparison.} How efficient is our algorithm compared to the state of the art~\cite{bovymulti}? 

\hspace{1em} \textbf{(RQ2) Scalability.} How does the computational cost of our method scale with different parameters, such as model size and horizon?

In both the tables and the figures, we report $|\POMDPStates|$ to be only the number of reachable states within the set horizon.
All the experiments were executed using 16GB of RAM and a processor Intel Xeon CPU E5-2680 v3 2.50GHz. Timeout is set at $3600s$.

\paragraph{Benchmarks.} We consider three benchmarks in our experiments. The first one is the extension of \emph{Rock Sample} to ME-POMDPs introduced in~\cite{bovymulti}.
For the second and third benchmarks, we extend two classical problems from~\cite{cassandra98} to ME-POMDPs: the \emph{Robot Navigation} and the \emph{Identification (Friend or Foe)} problem.
Instances of the Rock Sample problem are denoted by $\RS_{m,g,t}$, where $m$ is the size of the grid, $g$ is the number of good rocks, and $t$ is the total number of rocks. 
Instances of the Robot Navigation problem are denoted by $\RN_{\textsc{Map},d}$, where $\textsc{Map}$ indicates the map or layout, and $d$ indicates the maximum distance of initial states to a goal.
Instances of the Identification problem are denoted by $\IFF_{d_1, d_2,v_1,v_2}$, where $d_i$ and $v_i$ indicate the distance and visibility, respectively, of the two initial foe states.
The full description of the three benchmarks is in Appendix~\ref{sec:experiments-more}.

\paragraph{Comparison to baseline.}
To address RQ1, we compare our method with the method proposed in~\cite{bovymulti}, using their own tool.
We report the results on a representative subset of the three benchmarks in Tables~\ref{tab:rock-sample-main} and~\ref{tab:cassandra-main}.
Observe that our method is in general faster than~\cite{bovymulti}, and manages to successfully solve problems with up to a thousand states for horizon up to $\horizon=6$.

\begin{table}[t]
\centering
\small

\begin{minipage}{0.48\textwidth}
\centering
\caption{Rock Sampling.}
\label{tab:rock-sample-main}
\begin{tabular}{l@{\hspace{3pt}}r@{\hspace{3pt}}r@{\hspace{3pt}}r@{\hspace{3pt}}r@{\hspace{3pt}}r@{\hspace{3pt}}r@{\hspace{4pt}}r@{\hspace{4pt}}}
\toprule
Model & $|\POMDPStates|$ & $|\POMDPActions|$ & $|\POMDPObs|$ & $n$ & $\horizon$ & Time \cite{bovymulti} (s) & Time ours (s) \\
\midrule
$\RS_{3, 1, 2}$ & 26 & 7 & 3 & 2 & 4 & 1.736 & 0.2335 \\
$\RS_{3, 1, 3}$ & 34 & 8 & 3 & 3 & 3 & 2.434 & 0.05812 \\
$\RS_{4, 1, 2}$ & 32 & 7 & 3 & 2 & 7 & \textsc{to} & 154.2 \\
$\RS_{6, 1, 2}$ & 35 & 7 & 3 & 2 & 6 & 43.17 & 18.68 \\
$\RS_{3, 2, 7}$ & 246 & 12 & 3 & 21 & 5 & \textsc{to} & 459.9 \\
$\RS_{3, 4, 7}$ & 574 & 12 & 3 & 35 & 5 & \textsc{to} & 783.8 \\
$\RS_{3, 6, 7}$ & 192 & 12 & 3 & 7 & 6 & \textsc{to} & 3477 \\
\bottomrule
\end{tabular}

\end{minipage}
\hfill
\begin{minipage}{0.48\textwidth}
\centering
\caption{Robot Navigation and Identification.}
\label{tab:cassandra-main}
\begin{tabular}{l@{\hspace{3pt}}r@{\hspace{3pt}}r@{\hspace{3pt}}r@{\hspace{3pt}}r@{\hspace{3pt}}r@{\hspace{10pt}}r@{\hspace{6pt}}r@{\hspace{3pt}}}
\toprule
Model & $|\POMDPStates|$ & $|\POMDPActions|$ & $|\POMDPObs|$ & $n$ & $\horizon$ & \cite{bovymulti} (s) & Ours (s) \\
\midrule
$\RN_{\textsc{cit}, 3}$ & 48 & 4 & 28 & 2 & 2 & 304.2 & 0.0223 \\
$\RN_{\textsc{cit}, 3}$ & 48 & 4 & 28 & 2 & 6 & \textsc{to} & 914.1 \\
$\RN_{\textsc{mit}, 3}$ & 36 & 4 & 28 & 2 & 5 & \textsc{to} & 70.02 \\
$\RN_{\textsc{pen.}, 2}$ & 35 & 4 & 28 & 2 & 6 & \textsc{to} & 902.1 \\
$\IFF_{1, 2, 2, 4}$ & 15 & 4 & 22 & 3 & 6 & \textsc{to} & 271 \\
$\IFF_{1, 2, 4, 2}$ & 17 & 4 & 22 & 3 & 1 & \textsc{to} & 0.1388 \\
$\IFF_{1, 3, 0, 2}$ & 24 & 4 & 22 & 3 & 3 & 213.5 & 0.6066 \\
\bottomrule
\end{tabular}

\end{minipage}

\end{table}

\paragraph{Scalability.}
To investigate RQ2, we compare computation times in terms of horizon, number of states and number of initial states. In Fig.~\ref{fig:fig1} (left), we show computation time for increasing horizons for the models in Table~\ref{tab:rock-sample-main}. In Fig.~\ref{fig:fig1} (right), we report a scatter plot of computation time and horizon for the whole RockSample benchmark (see Appendix~\ref{sec:experiments-more} for description), color-coded by number of initial states. 
While there is a general trend that computation time increases slightly both with number of states and number of initial states, the strongest predictor of computation time (and thus, the most salient challenge for scalability) is horizon ($\horizon$).

\begin{figure}
     \centering
     \begin{subfigure}[b]{0.48\textwidth}
         \centering
         \includegraphics[width=\textwidth]{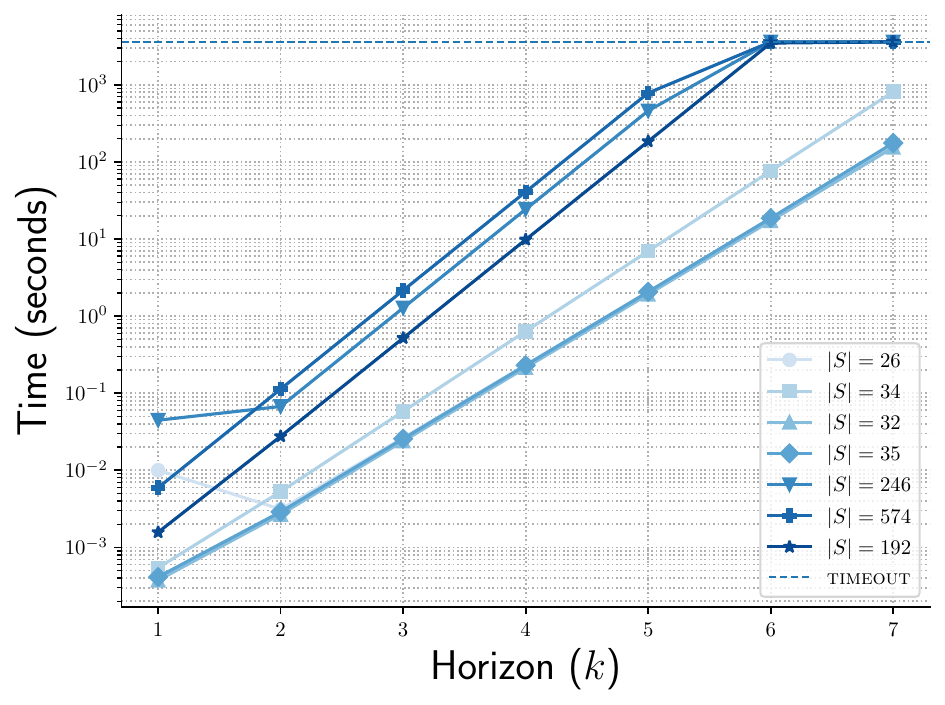}
     \end{subfigure}
     \begin{subfigure}[b]{0.48\textwidth}
         \centering
         \includegraphics[width=\textwidth]{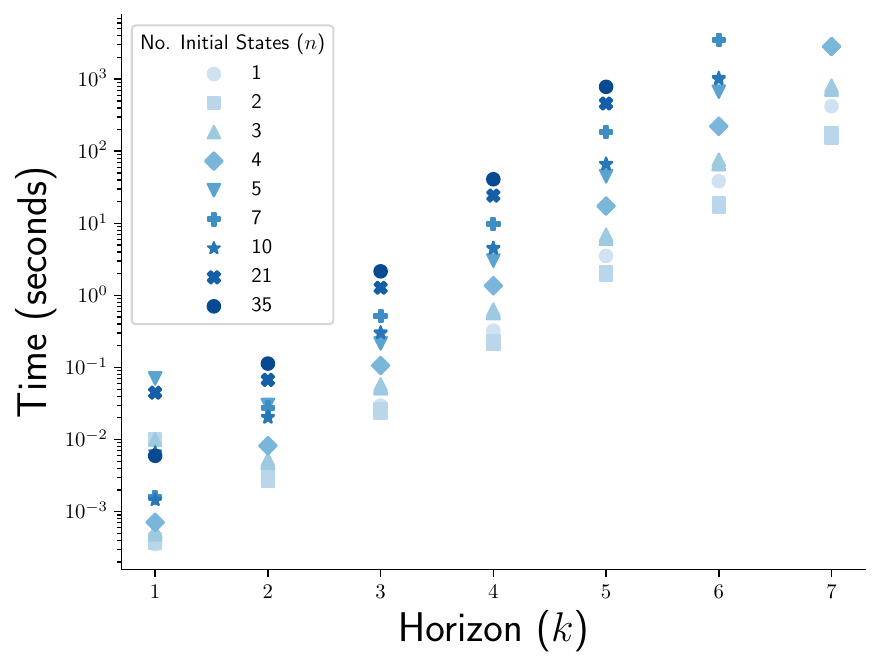}
     \end{subfigure}
        \caption{Horizon vs. Computation time in the RockSample benchmark, for different number of states (left), and different number of initial states (right).}
        \label{fig:fig1}
\end{figure}

\section{Conclusion}

We have considered the value computation problem in MEPOMDPs with finite-horizon objectives, 
providing tight complexity bounds, a space efficient algorithm, and a time efficient algorithm that proves to be empirically more performant than previously existing tools.

\paragraph{Limitations.}
From a theoretical perspective, our methods present a usual tradeoff between theoretical and practical performance:
Algorithm~\ref{algo:pspace}, while asymptotically optimal, is unfeasibly slow in practice; and Algorithm~\ref{algo:efficient}, while efficient in practice, uses exponential space in the worst case.
From a practical perspective, the main challenge for Algorithm~\ref{algo:efficient} is scalability. In particular, the dependence on the horizon is quite strong, with the solved instances with longest horizon at $\horizon =7$.

\paragraph{Future work.}
Along with finite-horizon objectives, one can also consider discounted-sum objectives.
The exact problem for discounted-sum objectives is already undecidable for POMDPs~\cite{madani1999undecidability}, but its approximate version can be reduced to finite-horizon objectives: our algorithm can therefore easily be applied (up to adding a discount factor) to solve that problem as well.
We leave the theoretical and practical complexities of this problem as future work.

\bibliography{biblio}

@article{papadimitriou,
  author    = {Papadimitriou, Christos H. and Tsitsiklis, John N.},
  title     = {The Complexity of {M}arkov Decision Processes},
  journal   = {Mathematics of Operations Research},
  volume    = {12},
  number    = {3},
  pages     = {441--450},
  year      = {1987},
  publisher = {INFORMS},
  ISSN      = {0364765X, 15265471},
  URL       = {http://www.jstor.org/stable/3689975},
}

@inproceedings{bovymulti,
  author    = {Bovy, Eline and Probine, Caleb and Suilen, Marnix and Topcu, Ufuk and Jansen, Nils},
  title     = {Multi-Environment {POMDP}s: Discrete Model Uncertainty Under Partial Observability},
  booktitle = {The Thirty-ninth Annual Conference on Neural Information Processing Systems},
  year      = {2025},
}

@article{thrun2005probabilistic,
  author       = {Nicholas Roy and
                  Geoffrey J. Gordon and
                  Sebastian Thrun},
  title        = {Finding Approximate {POMDP} solutions Through Belief Compression},
  journal      = {J. Artif. Intell. Res.},
  volume       = {23},
  pages        = {1--40},
  year         = {2005},
  url          = {https://doi.org/10.1613/jair.1496},
  doi          = {10.1613/JAIR.1496},
  bibsource    = {dblp computer science bibliography, https://dblp.org}
}

@article{hauskrecht2000value,
  author       = {Milos Hauskrecht and
                  Hamish S. F. Fraser},
  title        = {Planning treatment of ischemic heart disease with partially observable
                  {M}arkov decision processes},
  journal      = {Artif. Intell. Medicine},
  volume       = {18},
  number       = {3},
  pages        = {221--244},
  year         = {2000},
  url          = {https://doi.org/10.1016/S0933-3657(99)00042-1},
  doi          = {10.1016/S0933-3657(99)00042-1},
  bibsource    = {dblp computer science bibliography, https://dblp.org}
}

@inproceedings{DBLP:conf/fsttcs/RaskinS14,
  author       = {Jean{-}Fran{\c{c}}ois Raskin and
                  Ocan Sankur},
  editor       = {Venkatesh Raman and
                  S. P. Suresh},
  title        = {Multiple-Environment {M}arkov Decision Processes},
  booktitle    = {34th International Conference on Foundation of Software Technology
                  and Theoretical Computer Science, {FSTTCS} 2014, New Delhi, India,
                  December 15-17, 2014},
  series       = {LIPIcs},
  pages        = {531--543},
  publisher    = {Schloss Dagstuhl - Leibniz-Zentrum f{\"{u}}r Informatik},
  year         = {2014},
  url          = {https://doi.org/10.4230/LIPIcs.FSTTCS.2014.531},
  doi          = {10.4230/LIPICS.FSTTCS.2014.531},
  bibsource    = {dblp computer science bibliography, https://dblp.org}
}

@article{madani1999undecidability,
title = {On the undecidability of probabilistic planning and related stochastic optimization problems},
journal = {Artificial Intelligence},
volume = {147},
number = {1},
pages = {5-34},
year = {2003},
note = {Planning with Uncertainty and Incomplete Information},
issn = {0004-3702},
doi = {https://doi.org/10.1016/S0004-3702(02)00378-8},
url = {https://www.sciencedirect.com/science/article/pii/S0004370202003788},
author = {Omid Madani and Steve Hanks and Anne Condon}
}

@InProceedings{vegt2023memdp,
author="van der Vegt, Marck
and Jansen, Nils
and Junges, Sebastian",
editor="Sankaranarayanan, Sriram
and Sharygina, Natasha",
title="Robust Almost-Sure Reachability in Multi-Environment {MDP}s",
booktitle="Tools and Algorithms for the Construction and Analysis of Systems",
year="2023",
publisher="Springer Nature Switzerland",
address="Cham",
pages="508--526",
isbn="978-3-031-30823-9"
}

@InProceedings{pmlr-v37-osogami15,
  title = 	 {Robust partially observable {M}arkov decision process},
  author = 	 {Osogami, Takayuki},
  booktitle = 	 {Proceedings of the 32nd International Conference on Machine Learning},
  pages = 	 {106--115},
  year = 	 {2015},
  editor = 	 {Bach, Francis and Blei, David},
  volume = 	 {37},
  series = 	 {Proceedings of Machine Learning Research},
  address = 	 {Lille, France},
  month = 	 {07--09 Jul},
  publisher = {PMLR},
  url = 	 {https://proceedings.mlr.press/v37/osogami15.html}
}

@inproceedings{krale2025evaluating,
  title={On Evaluating Policies for Robust {POMDP}s},
  author={Krale, Merlijn and Bovy, Eline M and Galesloot, Maris FL and Sim{\~a}o, Thiago D and Jansen, Nils},
  booktitle={The Thirty-ninth Annual Conference on Neural Information Processing Systems},
  year={2025}
}

@phdthesis{cassandra98,
    author = {Cassandra, Anthony Rocco},
    advisor = {Kaelbling, Leslie Pack},
    title = {Exact and approximate algorithms for partially observable {M}arkov decision processes},
    school = {Brown University, {USA}},
    year = {1998},
    isbn = {0591833220},
    publisher = {Brown University}
}

@inproceedings{DBLP:conf/icalp/Chatterjee0RS25,
  author       = {Krishnendu Chatterjee and
                  Laurent Doyen and
                  Jean{-}Fran{\c{c}}ois Raskin and
                  Ocan Sankur},
  editor       = {Keren Censor{-}Hillel and
                  Fabrizio Grandoni and
                  Jo{\"{e}}l Ouaknine and
                  Gabriele Puppis},
  title        = {The Value Problem for Multiple-Environment {MDP}s with Parity Objective},
  booktitle    = {52nd International Colloquium on Automata, Languages, and Programming,
                  {ICALP} 2025, Aarhus, Denmark, July 8-11, 2025},
  series       = {LIPIcs},
  pages        = {150:1--150:17},
  publisher    = {Schloss Dagstuhl - Leibniz-Zentrum f{\"{u}}r Informatik},
  year         = {2025},
  url          = {https://doi.org/10.4230/LIPIcs.ICALP.2025.150},
  doi          = {10.4230/LIPICS.ICALP.2025.150},
  bibsource    = {dblp computer science bibliography, https://dblp.org}
}

@inproceedings{ICAPS20paper104,
  title     = {Multiple-Environment {M}arkov Decision Processes: Efficient Analysis and Applications},
  author    = {Krishnendu Chatterjee and Martin Chmel{\'{\i}}k and Deep Karkhanis and Petr Novotn{\'{y}} and Am{\'{e}}lie Royer},
  booktitle = {Proceedings of the 30th International Conference on Automated Planning and Scheduling ({ICAPS})},
  publisher = {{AAAI} Press},
  pages     = {48--56},
  year      = {2020}
}

@book{puterman1994,
  author    = {Puterman, Martin L.},
  title     = {{M}arkov Decision Processes: Discrete Stochastic Dynamic
               Programming},
  publisher = {John Wiley \& Sons},
  address   = {New York},
  year      = {1994},
  isbn      = {978-0-471-72782-8},
  doi       = {10.1002/9780470316887},
}

@book{sutton2018,
  author    = {Sutton, Richard S. and Barto, Andrew G.},
  title     = {Reinforcement Learning: An Introduction},
  edition   = {2nd},
  publisher = {The MIT Press},
  address   = {Cambridge, Massachusetts},
  year      = {2018},
  isbn      = {978-0-262-03924-6},
  url       = {http://incompleteideas.net/book/the-book-2nd.html},
}

@article{CHATTERJEE2016878,
title = {What is decidable about partially observable {M}arkov decision processes with $\omega$-regular objectives},
journal = {Journal of Computer and System Sciences},
volume = {82},
number = {5},
pages = {878-911},
year = {2016},
issn = {0022-0000},
doi = {https://doi.org/10.1016/j.jcss.2016.02.009},
url = {https://www.sciencedirect.com/science/article/pii/S0022000016000246},
author = {Krishnendu Chatterjee and Martin Chmelík and Mathieu Tracol}
}

@article{SAVITCH1970177,
title = {Relationships between nondeterministic and deterministic tape complexities},
journal = {Journal of Computer and System Sciences},
volume = {4},
number = {2},
pages = {177-192},
year = {1970},
issn = {0022-0000},
doi = {https://doi.org/10.1016/S0022-0000(70)80006-X},
author = {Walter J. Savitch}
}

@book{leonard2016geometry,
  author       = {I. E. Leonard and J. E. Lewis},
  title        = {Geometry of Convex Sets},
  publisher    = {Wiley-Blackwell},
  address      = {Hoboken, New Jersey, United States},
  year         = {2016},
  edition      = {1st},
  isbn13       = {9781119022664},
  isbn10       = {1119022665},
  pages        = {336},
}

@inproceedings{smith2004heuristic,
  title={Heuristic search value iteration for {POMDP}s},
  author={Smith, Trey and Simmons, Reid},
  booktitle={Proceedings of the 20th conference on Uncertainty in artificial intelligence},
  pages={520--527},
  year={2004}
}

\appendix

\section{Proof of \Cref{lm:hardness_approx}} \label{app:hardness_approx}

\lmHardnessApprox*

\begin{proof}
    The proof in \citep{papadimitriou} goes by reducing the problem QBF to \Cref{pb:threshold} in polynomial time.
    From a formula of the form $\phi = \exists x_1 \forall x_2 \dots \forall x_p \psi$, where $\psi$ is a quantifier-free formula in disjunctive normal form, it constructs a POMDP in which getting expected payoff at least $0$ is possible if and only if the formula $\phi$ is true.
    A careful reading of this construction also shows that if $\phi$ is not true, then every policy gives expected payoff not only negative, but at most $-\left(1/2\right)^{p/2}$.
    By defining $\epsilon = \left(1/2\right)^{p/2}$ as equal to this quantity, we have a polynomial-time reduction to the approximate threshold problem.
\end{proof}

\section{Proof of \Cref{lemma:mixed_policy_existence}}\label{app:mixed_policy_existence}

\lmMixedPolicyExistence*

\begin{proof}
    By \Cref{rk_max_degree}, the policy $\policy$ is a mixture of a finite number of deterministic policies.
    Using \Cref{rk:mixed-winning-prob}, we can therefore write:
    $$\MEP(\policy) \in \Conv\left(\MEP\left(\detPolicies{\POMDP}\right)\right).$$
    Now, by Carathéodory's theorem, there exist $n+1$ deterministic policies $\policy'_1, \dots, \policy'_{n+1} \in \detPolicies{\POMDP}$ and coefficients $\beta_1, \dots, \beta_{n+1}$ such that $\sum_{i=1}^{n+1} \beta_i = 1$ and $\sum_{i=1}^{n+1} \beta_i \MEP(\policy'_i) = \MEP(\policy)$.

    Consider now the polyhedron $X = \Conv\{ \MEP(\policy'_1), \dots, \MEP(\policy'_{n+1})\}$.
    The mapping $f: \MEPPoint \mapsto \min_i \WPPoint_i$ finds its maximum on a facet of $X$, i.e., the convex hull of at most $n$ of its vertices.
    We can therefore choose $n$ deterministic policies $\policy_1, \dots, \policy_n \in \{\policy'_1, \dots, \policy'_{n+1}\}$, and $n$ coefficients $\alpha_1, \dots, \alpha_n$ with $\sum_i \alpha_i = 1$, such that we have $\min_i \sum_{i=1}^n \alpha_i \MEP(\policy_i) \geq \lambda$.
    By \Cref{rk:mixed-winning-prob}, the policy $\policy^\star = \sum_i \alpha_i \policy_i$ offers then the desired guarantees.
\end{proof}

\section{Proof of \Cref{lem:exponentialpoints}} \label{app:exponentialpoints}

\lmExponentialPoints*

\begin{proof}
    For each triple $(s,a,s')\in \POMDPStates\times \POMDPActions\times S$ with a non-zero transition probability,
    consider the standard representation of the transition probability $\POMDPTransF(s,a)[s'] = p_{s,a,s'}/q_{s,a,s'}$, where $p_{s,a,s'}$ and $q_{s,a,s'}$ are positive coprime integers. Let $Q = \{ q_{s,a,s'}\:\mid\: (s,a,s')\in \POMDPStates\times \POMDPActions\times \POMDPStates \}$, $M=\max Q$ and $C$ be the least common multiple of $Q$.
    Since the size of the input is $\InputSize$, $M$ has at most $\InputSize$ bits, so it is at most $2^\InputSize$.
    Note that 
    \[C\leq \prod_{q\in Q}q \leq M^{|\POMDPStates|^2\cdot|\POMDPActions|} \leq 2^{\InputSize \cdot |\POMDPStates|^2\cdot|\POMDPActions|}\leq 2^{\InputSize^4}.
    \]
    Every transition probability can be rewritten as 
    $\delta(s,a)(s')=\tilde p_{s,a,s'}/C$, for suitable integers 
    $\tilde p_{s,a,s'} \leq C$.

    Consider a deterministic policy $\policy\in\detPolicies{P}$, a belief $\belief$, and a trajectory $\POMDPTraj = (\POMDPState_0, \POMDPAction_0, \dots, \POMDPState_{\horizon}) \in \allTrajectories_\horizon(\POMDP)$. 
    The probability of $\POMDPTraj$ is $\prob^{\policy}_{\belief}(\POMDPTraj)=\belief(\POMDPState_0)
    \prod_{i=0}^{\horizon-1}\POMDPTransF(\POMDPState_i, \POMDPAction_i)(\POMDPState_{i+1})$ if $\POMDPTraj$ is compatible with $\policy$, and $0$ otherwise. 
    If $\belief = \belief^0_i$ for some $i\in [n]$,
    then $\belief(\POMDPState_0)\in \{0,1\}$, and 
    since all factors can be expressed with denominator $C$, the probability of $\POMDPTraj$ is of the form 
    \[
    \prob^{\policy}_{\belief}(\POMDPTraj)=\frac{m_\POMDPTraj}{C^k},
    \]
    for some integer $m_\POMDPTraj \leq C^k$, and the associated payoff can be expressed as $\payoff(\tau)\frac{m_\POMDPTraj}{C^k}$. 
    Since all rewards are integers, $\payoff(\tau)$ is an integer, $-kR \leq \payoff(\tau) \leq kR$. 

    Let $\mathcal C = C^k$.
    The expected payoff $\expected^\horizon_{\belief}(\policy)$  is the sum of probabilities of trajectories times their payoff, so it can be expressed as a rational $p/\mathcal C$, where $p$ is an integer between $-kRC^k$ and $kRC^k$.
    Since $\mathcal C \leq 2^{k\cdot \InputSize^4} \leq 2^{\InputSize^5}$, and $kR\leq 2^{2\InputSize}$ we have that $X\subseteq N$ with $\pi(\InputSize) = \InputSize^5$, and all numbers in $N$ are expressed by an integer in $[0,\mathcal C]$, and integer in $[0,\horizon R]$, and a sign, for a total of $\pi(\InputSize)+2\InputSize+1$.
\end{proof}

\section{Proof of \Cref{lm:pspaceeasy}} \label{app:pspaceeasy}

\lmPspaceEasy*

\begin{proof}
    We show that \Cref{algo:npspace} is correct, and that it uses only polynomial memory.

    If we have a positive instance of \Cref{pb:threshold}, then by \Cref{lemma:mixed_policy_existence}, there exists a policy $\policy$ of degree at most $n$ such that $\MEP(\policy) \in \left[\lambda, +\infty\right)^n$; in other words, there exist $n$ deterministic policies $\policy_1, \dots, \policy_n$ such that $\Conv\{\MEP(\policy_1), \dots, \MEP(\policy_n)\} \cap \left[\lambda, +\infty\right)^n \neq \emptyset$.
    Consider the run of \Cref{algo:npspace} where, in the main procedure, the $i$th call to the procedure $\ConstructDetPolicy$ constructs the policy $\policy_i$, in the following sense.
    Each recursive call corresponds to a given observation sequence $\obsSeq$: consider that the action guessed is then $\POMDPAction = \policy(\obsSeq)$.
    Then, in the main procedure, the tuples $\MEPPoint_1, \dots, \MEPPoint_n$ that are obtained are such that $\MEPPoint_i = \MEP(\policy_i)$ for each $i$.
    Since we have $\Conv\{\MEP(\policy_1), \dots, \MEP(\policy_n)\} \cap \left[\lambda, +\infty\right)^n \neq \emptyset$, that run of the algorithm returns $\True$.

    Conversely, if there is a run of \Cref{algo:npspace} that returns $\True$, then the actions guessed along that run define $n$ deterministic policies $\policy_1, \dots, \policy_n$ such that we have $\Conv\{\MEP(\policy_1), \dots, \MEP(\policy_n)\} \cap \left[\lambda, +\infty\right)^n \neq \emptyset$.
    As a consequence, we have a positive instance of \Cref{pb:threshold}.

    Finally, this algorithm uses polynomial space.
    Indeed, the size of the recursion stack is $\horizon$ (remember that $\horizon$ is given in unary), and each recursive call uses a polynomial amount of information: the POMDP $\POMDP$ and the integer $n$, an integer $\l \leq \horizon$, and a multi-belief $\multibelief$ which grows linearly with $\l$.

    We therefore have a non-deterministic polynomial-space algorithm that decides \Cref{pb:threshold}, and consequently also its approximate version.
    Since $\NPSPACE = \PSPACE$, the conclusion follows.
\end{proof}

\section{Proof of \Cref{thm:space-pspace}} \label{app:space-pspace}

\thmSpacePspace*

\begin{proof}
    The correctness of \Cref{algo:pspace} follows from the correctness of \Cref{algo:npspace}, as it uses the same approach, enumerating multi-expected payoffs instead of guessing them.
    
    The $\main$ procedure enumerates all possible multi-expected payoff values $\bar X = (\bar x^1,\dots, \bar x^n)$ achievable by $n$ deterministic policies, and returns true whenever one is found that is achievable by the POMDP (i.e., it corresponds with a list of $n$ deterministic strategies), and it satisfies the condition of Lemma~\ref{lemma:mixed_policy_existence}.
    The $\CheckAchievableValue$ procedure checks whether the value $\bar x$ is achievable as a 
    multi-expected payoff
    in the POMDP $\POMDP$, from a multi-belief $\multibelief$ and a horizon of $\l$ steps. Here, achievable means that there exists a deterministic strategy $\sigma$ such that $\expected^\horizon_{\POMDPState_i}(\policy) = \bar x$. 
    A key difference between with respect to \Cref{algo:npspace} is that for every action, \Cref{algo:npspace} can build $\bar x^o$ by guessing a policy in the next recursive level, while \Cref{algo:pspace} cannot use non-determinism, so it has to enumerate all possible values of $\bar x^o$ obtained by following the recursion. 
    Consider the update in line 14 of \Cref{algo:npspace}, 
    where we enumerate $\POMDPObs =\{\obs_1,\dots,\obs_{|\POMDPObs|}\}$:
    \begin{equation}
    \label{eq:bellmanupdate}
    \MEPPoint \:\:\gets\:\:
    \sum_{\POMDPState \in \POMDPStates}\bigg(
                            r(s)\belief_i(s) \:\: + \sum_{j=1}^{|\POMDPObs|}\:\:\sum_{\POMDPStatebis \in \POMDPStates,\, \POMDPMapObs(\POMDPStatebis) = \obs_j} \belief_i(\POMDPState) \cdot \POMDPTransF(\POMDPState, \POMDPAction)(\POMDPStatebis) \cdot x_i^{\obs_j}\bigg)_{i\in[n]}.    
    \end{equation}
    The procedure $\CheckAchievableValue$, in line with $\ConstructDetPolicy$, checks whether there is a strategy that achieves a multi-expected payoff $\bar x$ from the multi-belief $\bar \beta$ in $\l$ steps by checking, for every pair $(\POMDPAction, \obs)\in \POMDPActions\times\POMDPObs$
    the multi-expected payoffs that can be achieved with a horizon $\l-1$ and a multi-belief $\bar\beta^\obs=\updateOp(\bar\beta, \POMDPAction, \obs)$, and aggregating them according to the Bellman update (Eq.~\eqref{eq:bellmanupdate}).
    To do so, it constructs partial sums of Eq.~\ref{eq:bellmanupdate} recursively.
    The $\auxFunc(j,\l,\multibelief,a, \bar x_{\mathsf{rem}})$ procedure checks whether there is a strategy that starts with action $a$ and such that
    \[
    \bar x_{\mathsf{rem}} = \bigg(\sum_{j'=j}^{|\POMDPObs|}
    \sum_{\POMDPState \in \POMDPStates}
                            r(s)\belief_i(s) \:\: + \sum_{\POMDPStatebis \in \POMDPStates,\, \POMDPMapObs(\POMDPStatebis) = \obs_{j'}} \belief_i(\POMDPState) \cdot \POMDPTransF(\POMDPState, \POMDPAction)(\POMDPStatebis) \cdot x_i^{\obs_{j'}}\bigg)_{i\in[n]}.
    \]
    To do so, it checks, for every possible value in $N$, whether it can be achieved as a multi-expected payoff in a horizon of $\l-1$ steps from the updated multi-belief $\updateOp(\multibelief,a,o_j)$, and for those values where it is possible, whether there is a way to fill up the remaining $|\POMDPObs|-j+1$ terms in the partial sum to make it up to $\bar x_{\mathsf{rem}}$.

    To check the space requirements, we need to check the space used by each procedure, and the size of the recursion stack.
    Let $D=\pi(\InputSize)$ (recall \Cref{lem:exponentialpoints}).
    
    First, observe that the main procedure just requires storing $n$ multi-expected payoff points, plus a high-level call to the $\CheckAchievableValue$ procedure. 

    Second, consider the space requirement for a single frame of both recursive procedures. In a single frame of the $\CheckAchievableValue$ procedure, the space requirement is that required to store the current multi-belief $\multibelief$ ($|\POMDPStates|\cdot n D$) 
    and the current multi-expected payoff value $\bar x$ ($n^2 D \leq |\POMDPStates|\cdot n D$), 
    plus the space required for a high-level call to $\auxFunc$.
    In a single frame of $\auxFunc$, the space requirement is the same as for $\CheckAchievableValue$: a multibelief and a multi-expected payoff.
    
    Third, consider the size of the recursion stack.
    If we consider the index $(\l, j)$ shared between the procedures $\CheckAchievableValue$ and $\auxFunc$ (considering that $\CheckAchievableValue$ has always $j=0$), at every recursive call, either $j$ increases with a call to $\auxFunc$ (lines 15 and 29), keeping $\l$ fixed, or $\l$ decreases with a call to $\CheckAchievableValue$ (line 27), resetting $j$ to $0$. Therefore, the recursion follows all the values of $(\l,j)$ in lexicographical order (descending for $\l$ and ascending for $j$), so the size of the recursion stack is always at most $|\POMDPObs|\cdot \horizon$.

    Finally, the space requirement is the space requirement of each frame, times the size of the recursion stack, plus the space required for the main procedure: $|\POMDPObs|\cdot \horizon |\POMDPStates|\cdot n\cdot D + n^2\cdot D$. 
\end{proof}

\section{Experimental Details}
\label{sec:experiments-more}

In this appendix, we provide further details on our experimental evaluation.

\subsection{Algorithm choice}
As noted in Sec.~\ref{sec:algorithms}, Algorithm~\ref{algo:pspace} is very inefficient in terms of computation time, so we only consider Algorithm~\ref{algo:efficient} for our empirical evaluation.

As for the comparison with~\cite{bovymulti}, a feature in their algorithm is that the user can specify a \emph{gap} or \emph{tolerance}, explicitly indicating how precise the approximation of the actual result is allowed to be.
In our experience with~\cite{bovymulti}'s tool, making this gap more or less restrictive had almost no effect in practice, both in the obtained value of the MEPOMDP, as well as in the computation time required.
We explain this phenomenon as a consequence of~\cite{bovymulti} being optimized for the infinite horizon discounted problem, where, according to the results presented in~\cite{bovymulti}, the gap does play a significant role.

\subsection{Benchmark description}

As benchmarks, we used modified versions of standard problems in the POMDP literature: RockSample, introduced in~\cite{smith2004heuristic}, with the modifications in~\cite{bovymulti}, and Robot Navigation and Aircraft Identification, both introduced in~\cite{cassandra98}.
For consistency with the sources of our models, we consider POMDPs with stochasticity in the observation function, that is, $\POMDPMapObs\colon \POMDPStates\times\POMDPActions\to\mathcal D(\POMDPObs)$. 
As mentioned in Remark~\ref{rmk:equivalentpomdps}, these models are equivalent, in the sense that one can systematically build a POMDP with deterministic observation from one with stochastic observation, that has the same value for any horizon.
\begin{figure}[t]
    \centering
    \includegraphics[width=0.8\linewidth]{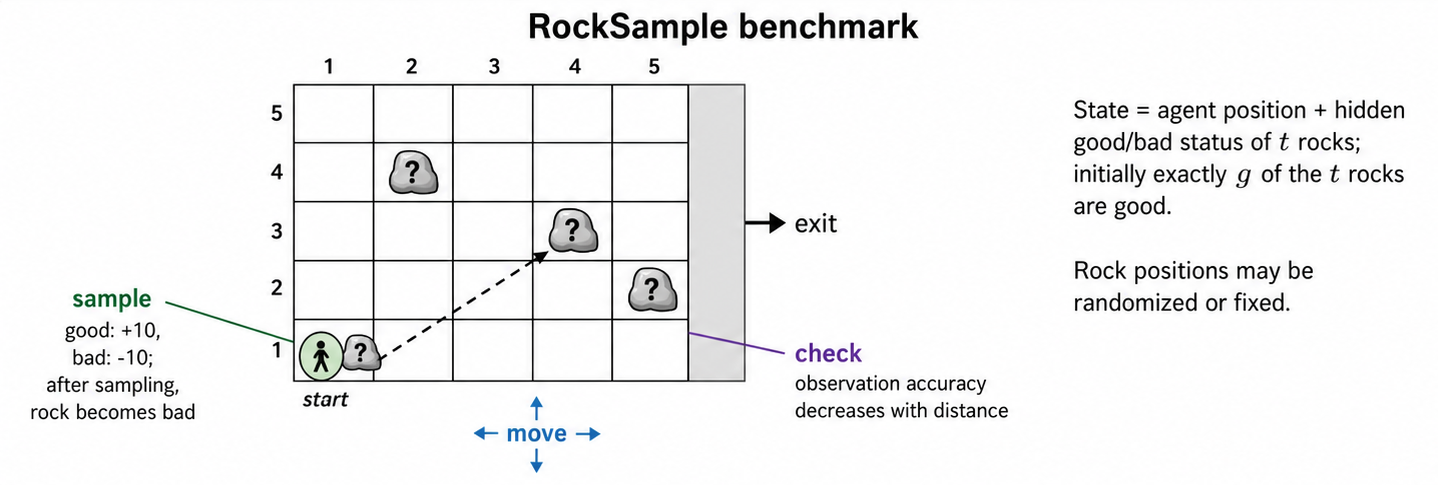}
    \caption{RockSample problem.}
    \label{fig:rocksample-illustration}
\end{figure}

\subsubsection{Rock Sample}
The RockSample problem was introduced in~\cite{smith2004heuristic}, and extended to the ME-POMDP setting in~\cite{bovymulti}. An instance consists of an $m\times m$ grid with $t$ rocks at known positions. The agent starts in the bottom-left cell, moves in the four cardinal directions, can exit through the right boundary, and may sample a rock when located on its cell. Rock qualities are hidden: initially exactly $g$ of the $t$ rocks are good and the remaining $t-g$ rocks are bad. Sampling a good rock gives reward $+10$ and changes that rock to bad; sampling a bad rock gives reward $-10$. For each rock, a check action returns a noisy good/bad observation whose correctness decreases with the distance between the agent and the queried rock. 
We denote instances of the problem as $\RS_{m,g,t}$. 
Note that, in our description, we use the instances corresponding to $\RS^{c}_{m,g,t}$ in~\cite{bovymulti}, that is, with fixed rock positions.
We illustrate the problem in Figure~\ref{fig:rocksample-illustration}.
The initial states are those states in which the agent is located in the bottom-left corner and g out of t rocks are good.
The multiple initial states, i.e., the position of the rocks, are sampled in the same way as~\cite{bovymulti}.

\subsubsection{Robot navigation}
\label{app:robot-navigation-benchmarks}

The Robot Navigation problem~\cite[Appendix H.5]{cassandra98} describes a robot moving through a room, that has to reach different goals.
We use the \textsc{CIT}, \textsc{MIT}, \textsc{Pentagon}, and \textsc{SUNYSB} robot-navigation maps, illustrated in Figure~\ref{fig:RobotNavigationMaps}.  Each map is a discretized indoor environment.  A nonterminal state is a pair
\(
    s=(x,\theta),
\)
where \(x\) is a traversable map location and \(\theta\) is one of the four cardinal orientations.  The source construction also includes an absorbing terminal state, entered after the robot declares that it is at the goal.  The concrete four-action instances use
$\POMDPActions_{\mathrm{nav}}
    = \{\mathsf{forward},\mathsf{left},\mathsf{right},\mathsf{declare}\}$.
The first three actions are noisy motion actions, and \(\mathsf{declare}\) moves the process to the absorbing terminal state.  The reward is zero except for \(\mathsf{declare}\): declaring at the goal gives reward \(+1\), while declaring away from the goal gives reward \(-1\).

The motion model is specified by primitive outcomes.  Let \(N\) denote no movement, \(F\) a one-cell forward move, \(L\) a \(90^\circ\) left turn, \(R\) a \(90^\circ\) right turn, and \(A\) absorption.  The noisy outcomes are
\[
\begin{array}{c|l}
\text{Action} & \text{Primitive outcome distribution} \\
\hline
\mathsf{forward} & N:0.11,\quad F:0.88,\quad FF:0.01 \\
\mathsf{left}    & N:0.05,\quad L:0.90,\quad LL:0.05 \\
\mathsf{right}   & N:0.05,\quad R:0.90,\quad RR:0.05 \\
\mathsf{declare} & A:1.00 .
\end{array}
\]

\begin{figure}
     \centering
     \begin{subfigure}[b]{0.24\textwidth}
         \centering
         \includegraphics[width=\textwidth]{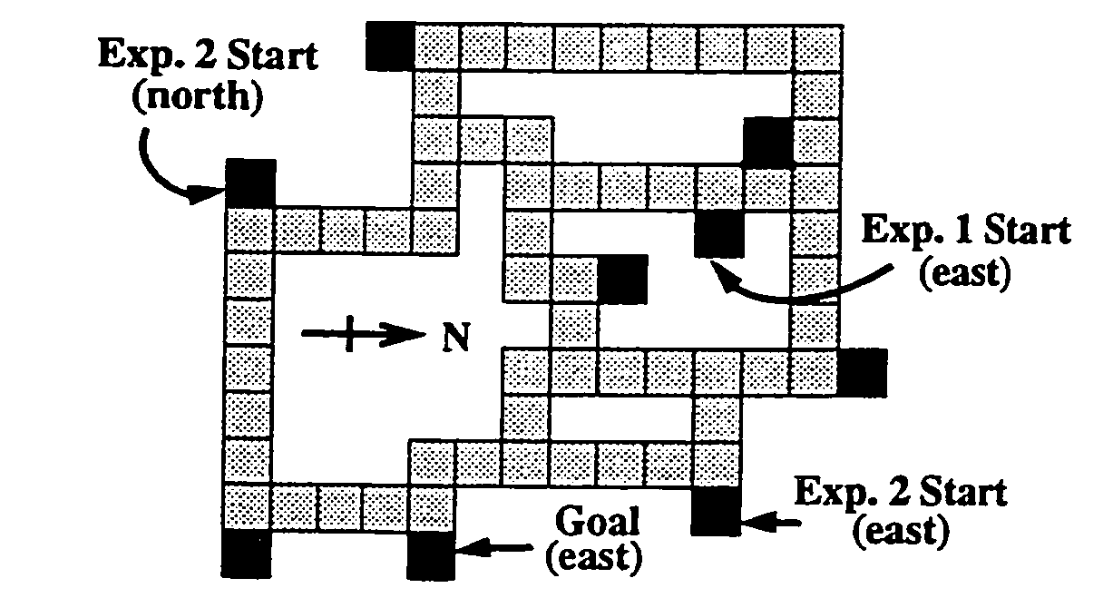}
         \caption{\textsc{CIT} (281)}
     \end{subfigure}
     \begin{subfigure}[b]{0.24\textwidth}
         \centering
         \includegraphics[width=\textwidth]{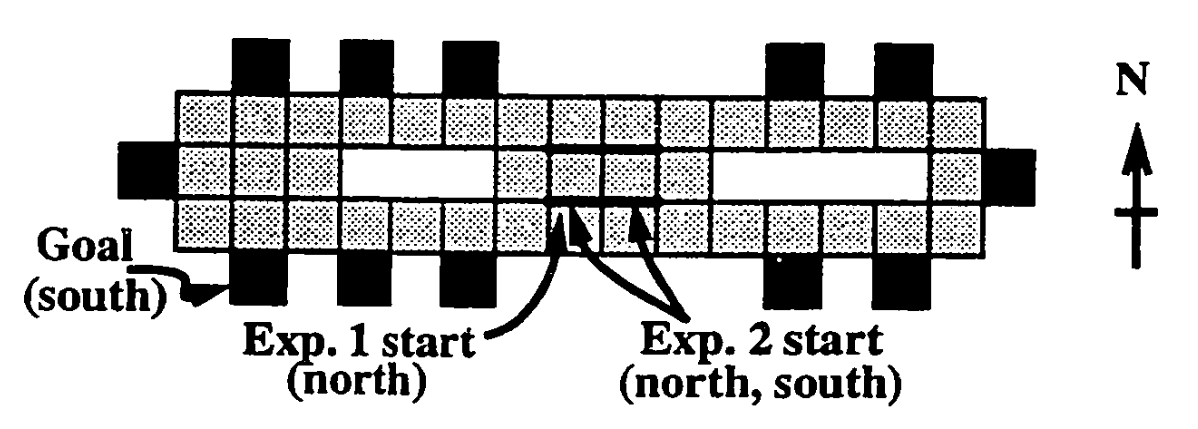}
         \caption{\textsc{MIT} (201)}
     \end{subfigure}
     \begin{subfigure}[b]{0.24\textwidth}
         \centering
         \includegraphics[width=\textwidth]{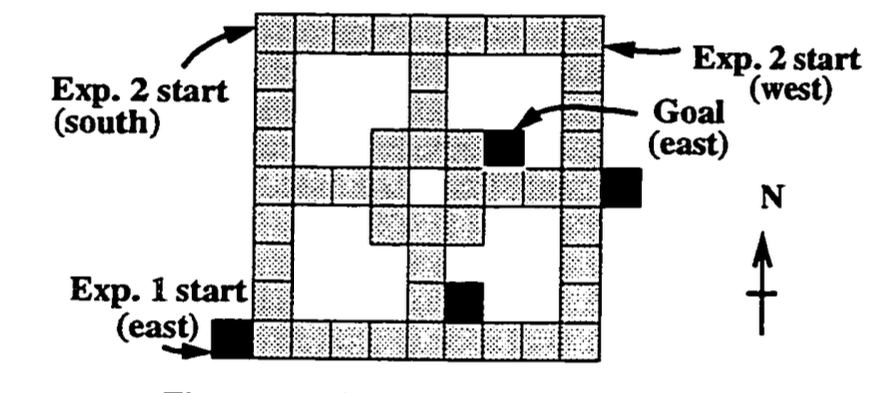}
         \caption{\textsc{Pentagon} (209)} 
     \end{subfigure}
     \begin{subfigure}[b]{0.24\textwidth}
         \centering
         \includegraphics[width=\textwidth]{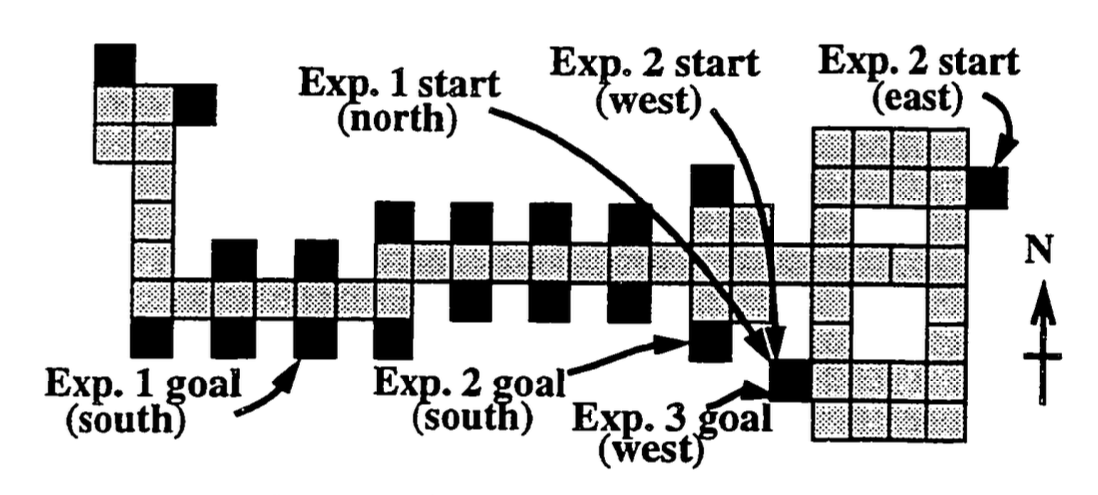}
         \caption{\textsc{SUNYSB} (297)}
     \end{subfigure}
        \caption{Robot Navigation Maps. \textsc{Name} (No. states). Figures from~\cite[Figures 6.1--6.4]{cassandra98}.}
        \label{fig:RobotNavigationMaps}
\end{figure}

For each nonterminal state \(s\), let
\[
    \sigma(s)=(\sigma_{\mathrm{front}}(s),\sigma_{\mathrm{left}}(s),\sigma_{\mathrm{right}}(s))
        \in \{\mathsf{open},\mathsf{wall},\mathsf{doorway}\}^3
\]
be the true local observation in the directions in front of the robot, to its left, and to its right.  We replace the source observation table by the alphabet
\[
    \POMDPObs_{\mathrm{nav}}
      = \{\mathsf{open},\mathsf{wall},\mathsf{doorway}\}^3
        \cup \{(\mathsf{undet},\mathsf{undet},\mathsf{undet})\},
\]
so \(|\POMDPObs_{\mathrm{nav}}|=28\).  Observations are independent of the action, and depend only on how many sides of the robot have open space, wall, or doorway.  Let
\[
    \alpha_{\mathrm{ow}}(s)
      = \#\{i \in \{\mathrm{front},\mathrm{left},\mathrm{right}\}: \sigma_i(s)\in\{\mathsf{open},\mathsf{wall}\}\},
\]
and let \(\alpha_{\mathrm{door}}(s)=3-\alpha_{\mathrm{ow}}(s)\).  The probability of observing the correct local signature is
\[
    p_{\mathrm{corr}}(s)
      = 0.9^{\alpha_{\mathrm{ow}}(s)} 0.7^{\alpha_{\mathrm{door}}(s)} .
\]
Thus, for every action \(a\), the probability to observe $o$ is $p_{\mathrm{corr}}(s)$ if $o=\sigma(s)$, $1- p_{\mathrm{corr}}(s)$ if $o=(\mathsf{undet},\mathsf{undet},\mathsf{undet})$, and $0$ otherwise.

For each map we choose a pair of initial states that are at a distance at most $d$ to the goal and maximize the observational diversity. 
Let \(G_E\) be the set of goal states for map \(E\), and let \(d_E\) be shortest-path distance in the nonterminal state graph induced by positive-probability motion transitions, with the \(\mathsf{declare}\) transition excluded.  The candidate set is
\[
    C_E = \{s\in S_E\setminus\{\bot\}: d_E(s,G_E)\le d\} . 
\]
In our test cases, the parameter $d$ is either $1$, $2$, or $3$.
As initial states, we select the set  $I_E=\{s_E^{(1)},s_E^{(2)}\}$ defined as
$
    \{s_E^{(1)},s_E^{(2)}\}
      \in \arg\max_{\{s,t\}\subseteq C_E,\,s\ne t}
      D_{\mathrm{JS}}(\POMDPMapObs(s),\POMDPMapObs(t)),
$
where
\[
    D_{\mathrm{JS}}(P,Q)
      = \frac{1}{2}D_{\mathrm{KL}}\!\left(P\middle\|\frac{P+Q}{2}\right)
        +\frac{1}{2}D_{\mathrm{KL}}\!\left(Q\middle\|\frac{P+Q}{2}\right)
\]
is the Jensen-Shannon divergence over \(\POMDPObs_{\mathrm{nav}}\). 
In our test cases, the largest Jensen-Shannon divergence we observed is $\approx 0.204$.

\subsubsection{Identification, Friend or Foe}
\label{app:aircraft-identification-benchmark}

The IFF problem (also referred to as \emph{aircraft-identification} problem) models an incoming aircraft that may be a friend or a foe.  A nonterminal state is $s=(\tau,d,v)$,
where \(\tau\in\{\mathsf{friend},\mathsf{foe}\}\) is the aircraft type, \(d\in\{0,\ldots,D-1\}\) is the discretized distance to the base, and \(v\in\{0,\ldots,4\}\) is the visibility of the base to the aircraft, i.e., how well does the aircraft see the base.  We use \(D=10\), so there are \(2D\cdot 5=100\) nonterminal states.  
As in the source benchmark, there are four absorbing states:
\[
    \mathsf{base\mbox{-}safe},\quad
    \mathsf{base\mbox{-}destroyed},\quad
    \mathsf{foe\mbox{-}destroyed},\quad
    \mathsf{friend\mbox{-}destroyed},
\]
for a total of \(44\) states.  The action set is
$
    \POMDPActions_{\mathrm{air}}
      = \{\mathsf{active},\mathsf{passive},\mathsf{noop},\mathsf{attack}\} $.
The actions \(\mathsf{active}\) and \(\mathsf{passive}\) are sensing actions; \(\mathsf{active}\) is more accurate but tends to increase visibility more than \(\mathsf{passive}\).  The action \(\mathsf{noop}\) uses no sensor, and \(\mathsf{attack}\) attempts to destroy the aircraft.

On nonterminal transitions that do not enter an absorbing state, the aircraft type \(\tau\) is unchanged.  For \(d>0\), the aircraft advances one distance bin toward the base with probability \(0.8\), and remains at the same distance with probability \(0.2\).
The distance transition is independent of \(\tau\), \(v\), and the action, conditional on not entering an absorbing state.
We modify the visibility transition by forbidding visibility decreases.  Let the source visibility-increment probabilities over \(\Delta\in\{-1,0,+1\}\) be
\[
\begin{array}{c|ccc}
\text{Action} & q_a(-1) & q_a(0) & q_a(+1) \\
\hline
\mathsf{noop}    & 0.25 & 0.75 & 0.00 \\
\mathsf{passive} & 0.00 & 0.90 & 0.10 \\
\mathsf{active}  & 0.00 & 0.05 & 0.95 \\
\mathsf{attack}  & 0.00 & 0.20 & 0.80 .
\end{array}
\]
For our benchmark, transitions to lower visibility are removed and the remaining probabilities are renormalized:
\[
    P(v'=v+\Delta\mid v,a)
      =
      \frac{
        q_a(\Delta)\,\mathbf 1\{\Delta\ge 0\}\,\mathbf 1\{0\le v+\Delta\le 4\}
      }{
        \sum_{\delta\in\{-1,0,+1\}}
        q_a(\delta)\,\mathbf 1\{\delta\ge 0\}\,\mathbf 1\{0\le v+\delta\le 4\}
      } .
\]
Consequently, for \(v<4\), \(\mathsf{noop}\) leaves visibility unchanged with probability \(1\), while \(\mathsf{passive}\), \(\mathsf{active}\), and \(\mathsf{attack}\) use the probabilities shown in the \(0\) and \(+1\) columns above.  At \(v=4\), all actions leave visibility at \(4\) with probability \(1\).

The attack action may enter an absorbing state before the normal distance and visibility update is applied.  At distance \(d\), the probability of destroying the aircraft is
\[
    p_{\mathrm{hit}}(d)=\frac{(D-d)^2}{D^2} .
\]
If the attack succeeds, the successor is \(\mathsf{foe\mbox{-}destroyed}\) when \(\tau=\mathsf{foe}\), and \(\mathsf{friend\mbox{-}destroyed}\) when \(\tau=\mathsf{friend}\).  If the attack fails, the usual distance and modified visibility transitions are applied, conditional on failure.  When a non-attacked friend is already at \(d=0\), the next state is \(\mathsf{base\mbox{-}safe}\) with probability \(1\).  When a non-attacked foe is already at \(d=0\), the next state is \(\mathsf{base\mbox{-}destroyed}\) with probability \(0.25+0.1v\), and \(\mathsf{base\mbox{-}safe}\) otherwise.
The only nonzero rewards are the immediate rewards for entering absorbing states:
\[
\begin{array}{c|r}
\text{Absorbing state} & \text{Reward} \\
\hline
\mathsf{base\mbox{-}safe}       & 0 \\
\mathsf{base\mbox{-}destroyed}  & -100 \\
\mathsf{foe\mbox{-}destroyed}   & 20 \\
\mathsf{friend\mbox{-}destroyed}& -30 .
\end{array}
\]

The observation alphabet is
\[
    \POMDPObs_{\mathrm{air}}
      = \bigl(\{\mathsf{friend},\mathsf{foe}\}\times\{0,\ldots,D-1\}\bigr)
        \cup \{\mathsf{nothing},\mathsf{absorb}\},
\]
so \(|\POMDPObs_{\mathrm{air}}|=2D+2=22\).  The action \(\mathsf{noop}\) emits \(\mathsf{nothing}\) with probability \(1\), and absorbing states emit \(\mathsf{absorb}\) with probability \(1\).  For sensing, let \(\bar\tau\) denote the opposite type and let \(d^+=\min\{d+1,D-1\}\).  We use the following modified observation probabilities:
\[
\begin{aligned}
    \POMDPMapObs((\tau,d)\mid (\tau,d,v),\mathsf{active}) &= 0.9, &
    \POMDPMapObs((\bar\tau,d^+)\mid (\tau,d,v),\mathsf{active}) &= 0.1,\\
    \POMDPMapObs((\tau,d)\mid (\tau,d,v),\mathsf{passive}) &= 0.8, &
    \POMDPMapObs((\bar\tau,d^+)\mid (\tau,d,v),\mathsf{passive}) &= 0.2 .
\end{aligned}
\]
All other type-distance observations have probability zero.  For \(d=D-1\), the reported distance \(d^+\) is clipped to the largest distance bin, matching the boundary convention of the discretized observation space.  The attack action follows the source-domain convention: successful terminal outcomes emit \(\mathsf{absorb}\), while unsuccessful nonterminal outcomes use the active-sensing observation model above.

Initial states in the aircraft-identification experiments are chosen from two foe states and one friend state.  For parameters
\[
    0\le d_1 < d_2 \le 4,
    \qquad
    v_1,v_2\in\{0,\ldots,4\},
\]
we use the initial-state set
\[
    I_{d_1,d_2,v_1,v_2}
      = \{(\mathsf{foe},d_1,v_1),\ (\mathsf{foe},d_2,v_2),\ (\mathsf{friend},d_2,\bar v)\},
\]
where \(\bar v\) is any fixed visibility level for the friend state.  The value of \(\bar v\) is immaterial for our initialization because the friend's visibility does not affect the modified observation probabilities or the terminal reward associated with the friend/foe distinction.

Problem instances are labelled according to their initial state parameters, i.e.,  $\IFF_{d_1,d_2,v_1,v_2}$.

\subsection{Scalability: Further results}

In Figure~\ref{fig:RockSample_scalability2}, we extend our study of the scalability of our approach in the RockSample problem by plotting computation time in terms of total number of states and number of initial states. In both cases, we observe that computation time increases slightly with the size of the POMDP, but the major scalability issue is the horizon.

In Figure~\ref{fig:Cassandra_scalability}, we show analogous scatter plots for the robot navigation and identification benchmarks. As before, we observe that the dominant factor for scalability is horizon.

\begin{figure}[t]
     \centering
     \begin{subfigure}[b]{0.48\textwidth}
         \centering
         \includegraphics[width=\textwidth]{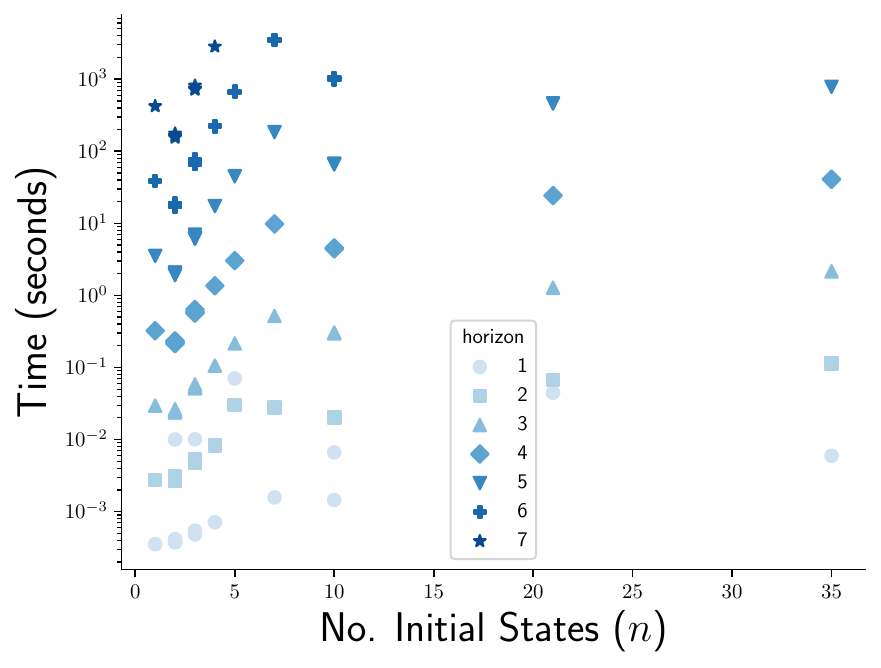}
     \end{subfigure}
     \begin{subfigure}[b]{0.48\textwidth}
         \centering
         \includegraphics[width=\textwidth]{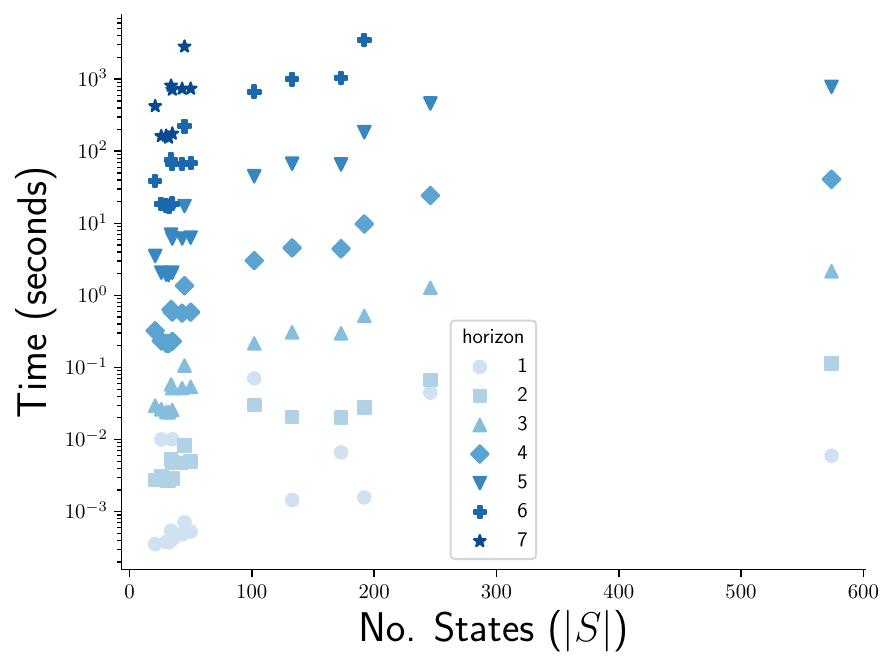}
     \end{subfigure}
        \caption{Scalability of Rock Sampling benchmark}
        \label{fig:RockSample_scalability2}
\end{figure}

\begin{figure}[t]
     \centering
     \begin{subfigure}[b]{0.48\textwidth}
         \centering
         \includegraphics[width=\textwidth]{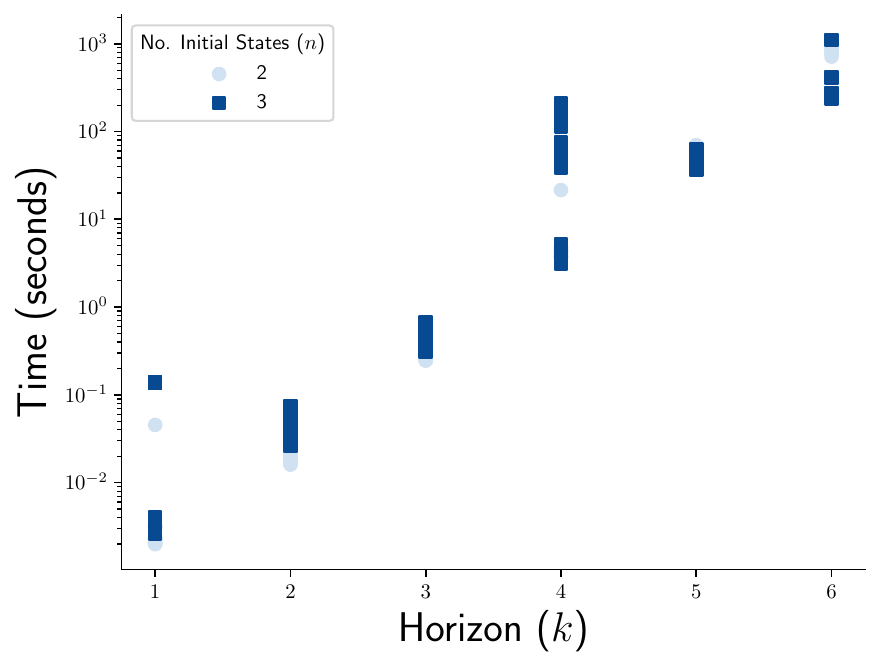}
     \end{subfigure}
     \begin{subfigure}[b]{0.48\textwidth}
         \centering
         \includegraphics[width=\textwidth]{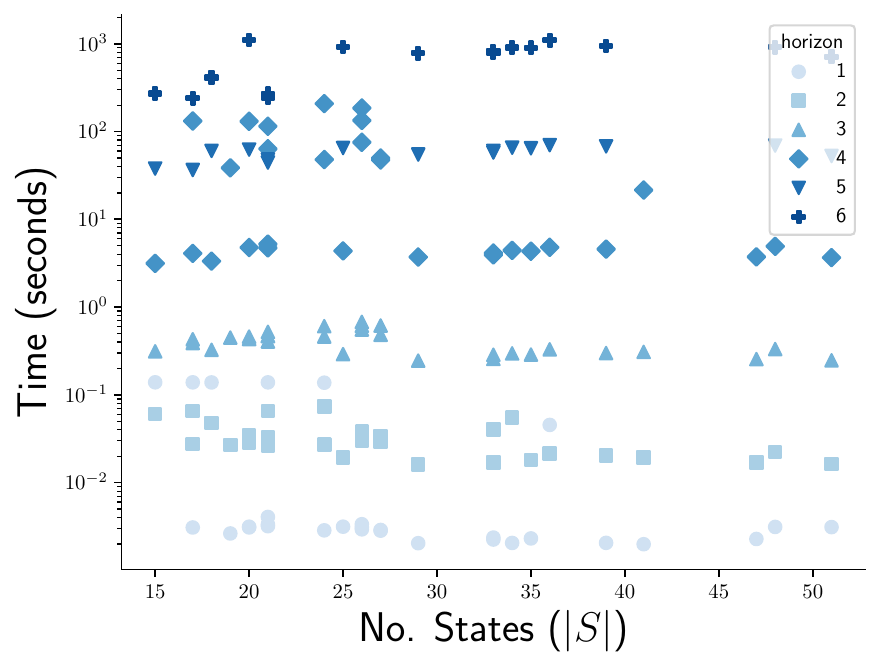}
     \end{subfigure}
        \caption{Scalability of the Robot Navigation and Identification Benchmarks}
        \label{fig:Cassandra_scalability}
\end{figure}

\FloatBarrier 

\subsection{Comparison to baseline: Further results}

In Tables~\ref{tab:rock-sample-full-0}~to~\ref{tab:rock-sample-full-3}, we extend the results in Table~\ref{tab:rock-sample-main} by reporting the whole suite of experiments on the RockSample problem, comparing our runtime with the runtime of~\cite{bovymulti}.
In Tables~\ref{tab:robot-navigation-full-0} to~\ref{tab:robot-navigation-full-3} we extend the results for the robot navigation problem, and in Tables~\ref{tab:iff-full-0} to~\ref{tab:iff-full-2}, for the identification problem.
The results show that our method is faster almost all the time, with an average ratio for the non-timeout instances of $1s$ of computation (ours) for $14537s$ of computation (\cite{bovymulti}).

\begin{table}[t]
\centering
\small
\caption{Rock sampling full benchmark results (Part 1 of 5).}
\label{tab:rock-sample-full-0}
\begin{tabular}{lrrrrrrr}
\toprule
Model & $|\POMDPStates|$ & $|\POMDPActions|$ & $|\POMDPObs|$ & $n$ & $\horizon$ & Time \cite{bovymulti} (s) & Time ours (s) \\
\midrule
$\RS_{3, 1, 2}$ & 26 & 7 & 3 & 2 & 1 & 0.3738 & 0.01002 \\
$\RS_{3, 1, 2}$ & 26 & 7 & 3 & 2 & 2 & 0.7799 & 0.003126 \\
$\RS_{3, 1, 2}$ & 26 & 7 & 3 & 2 & 3 & 1.229 & 0.02633 \\
$\RS_{3, 1, 2}$ & 26 & 7 & 3 & 2 & 4 & 1.736 & 0.2335 \\
$\RS_{3, 1, 2}$ & 26 & 7 & 3 & 2 & 5 & 2.281 & 2.068 \\
$\RS_{3, 1, 2}$ & 26 & 7 & 3 & 2 & 6 & \textsc{to} & 18.45 \\
$\RS_{3, 1, 2}$ & 26 & 7 & 3 & 2 & 7 & \textsc{to} & 162 \\
$\RS_{3, 1, 3}$ & 34 & 8 & 3 & 3 & 1 & 0.7684 & 0.000544 \\
$\RS_{3, 1, 3}$ & 34 & 8 & 3 & 3 & 2 & 1.532 & 0.005291 \\
$\RS_{3, 1, 3}$ & 34 & 8 & 3 & 3 & 3 & 2.434 & 0.05812 \\
$\RS_{3, 1, 3}$ & 34 & 8 & 3 & 3 & 4 & \textsc{to} & 0.636 \\
$\RS_{3, 1, 3}$ & 34 & 8 & 3 & 3 & 5 & \textsc{to} & 6.96 \\
$\RS_{3, 1, 3}$ & 34 & 8 & 3 & 3 & 6 & \textsc{to} & 76.41 \\
$\RS_{3, 1, 3}$ & 34 & 8 & 3 & 3 & 7 & \textsc{to} & 812 \\
$\RS_{3, 1, 4}$ & 45 & 9 & 3 & 4 & 1 & 1.267 & 0.000711 \\
$\RS_{3, 1, 4}$ & 45 & 9 & 3 & 4 & 2 & 2.694 & 0.008213 \\
$\RS_{3, 1, 4}$ & 45 & 9 & 3 & 4 & 3 & 4.234 & 0.1065 \\
$\RS_{3, 1, 4}$ & 45 & 9 & 3 & 4 & 4 & \textsc{to} & 1.362 \\
$\RS_{3, 1, 4}$ & 45 & 9 & 3 & 4 & 5 & \textsc{to} & 17.37 \\
$\RS_{3, 1, 4}$ & 45 & 9 & 3 & 4 & 6 & \textsc{to} & 222.1 \\
$\RS_{3, 1, 4}$ & 45 & 9 & 3 & 4 & 7 & \textsc{to} & 2834 \\
$\RS_{3, 2, 3}$ & 50 & 8 & 3 & 3 & 1 & 2.158 & 0.000529 \\
$\RS_{3, 2, 3}$ & 50 & 8 & 3 & 3 & 2 & 14.93 & 0.004932 \\
$\RS_{3, 2, 3}$ & 50 & 8 & 3 & 3 & 3 & 7.246 & 0.05386 \\
$\RS_{3, 2, 3}$ & 50 & 8 & 3 & 3 & 4 & \textsc{to} & 0.5853 \\
$\RS_{3, 2, 3}$ & 50 & 8 & 3 & 3 & 5 & \textsc{to} & 6.34 \\
$\RS_{3, 2, 3}$ & 50 & 8 & 3 & 3 & 6 & \textsc{to} & 68.53 \\
$\RS_{3, 2, 3}$ & 50 & 8 & 3 & 3 & 7 & \textsc{to} & 738.4 \\
\bottomrule
\end{tabular}
\end{table}

\begin{table}[t]
\centering
\small
\caption{Rock sampling full benchmark results (Part 2 of 5).}
\label{tab:rock-sample-full-1}
\begin{tabular}{lrrrrrrr}
\toprule
Model & $|\POMDPStates|$ & $|\POMDPActions|$ & $|\POMDPObs|$ & $n$ & $\horizon$ & Time \cite{bovymulti} (s) & Time ours (s) \\
\midrule
$\RS_{4, 1, 2}$ & 32 & 7 & 3 & 2 & 1 & 1.111 & 0.000375 \\
$\RS_{4, 1, 2}$ & 32 & 7 & 3 & 2 & 2 & 2.344 & 0.002675 \\
$\RS_{4, 1, 2}$ & 32 & 7 & 3 & 2 & 3 & 14.24 & 0.02388 \\
$\RS_{4, 1, 2}$ & 32 & 7 & 3 & 2 & 4 & 5.263 & 0.2143 \\
$\RS_{4, 1, 2}$ & 32 & 7 & 3 & 2 & 5 & 6.878 & 1.917 \\
$\RS_{4, 1, 2}$ & 32 & 7 & 3 & 2 & 6 & 24.33 & 17.2 \\
$\RS_{4, 1, 2}$ & 32 & 7 & 3 & 2 & 7 & \textsc{to} & 154.2 \\
\bottomrule
\end{tabular}
\end{table}

\begin{table}[t]
\centering
\small
\caption{Rock sampling full benchmark results (Part 3 of 5).}
\label{tab:rock-sample-full-2}
\begin{tabular}{lrrrrrrr}
\toprule
Model & $|\POMDPStates|$ & $|\POMDPActions|$ & $|\POMDPObs|$ & $n$ & $\horizon$ & Time \cite{bovymulti} (s) & Time ours (s) \\
\midrule
$\RS_{5, 1, 2}$ & 30 & 7 & 3 & 2 & 1 & 2.659 & 0.000382 \\
$\RS_{5, 1, 2}$ & 30 & 7 & 3 & 2 & 2 & 5.628 & 0.002711 \\
$\RS_{5, 1, 2}$ & 30 & 7 & 3 & 2 & 3 & 8.921 & 0.0241 \\
$\RS_{5, 1, 2}$ & 30 & 7 & 3 & 2 & 4 & 12.58 & 0.2163 \\
$\RS_{5, 1, 2}$ & 30 & 7 & 3 & 2 & 5 & 16.55 & 1.938 \\
$\RS_{5, 1, 2}$ & 30 & 7 & 3 & 2 & 6 & 20.9 & 17.61 \\
$\RS_{5, 1, 2}$ & 30 & 7 & 3 & 2 & 7 & 25.42 & 164.3 \\
$\RS_{6, 1, 2}$ & 35 & 7 & 3 & 2 & 1 & 5.432 & 0.000414 \\
$\RS_{6, 1, 2}$ & 35 & 7 & 3 & 2 & 2 & 11.58 & 0.002896 \\
$\RS_{6, 1, 2}$ & 35 & 7 & 3 & 2 & 3 & 18.32 & 0.02568 \\
$\RS_{6, 1, 2}$ & 35 & 7 & 3 & 2 & 4 & 26.05 & 0.2304 \\
$\RS_{6, 1, 2}$ & 35 & 7 & 3 & 2 & 5 & \textsc{to} & 2.066 \\
$\RS_{6, 1, 2}$ & 35 & 7 & 3 & 2 & 6 & 43.17 & 18.68 \\
$\RS_{6, 1, 2}$ & 35 & 7 & 3 & 2 & 7 & 52.68 & 175.8 \\
\bottomrule
\end{tabular}
\end{table}

\begin{table}[t]
\centering
\small
\caption{Rock sampling full benchmark results (Part 4 of 5).}
\label{tab:rock-sample-full-3}
\begin{tabular}{lrrrrrrr}
\toprule
Model & $|\POMDPStates|$ & $|\POMDPActions|$ & $|\POMDPObs|$ & $n$ & $\horizon$ & Time \cite{bovymulti} (s) & Time ours (s) \\
\midrule
$\RS_{3, 1, 3}$ & 35 & 8 & 3 & 3 & 1 & 0.7359 & 0.01007 \\
$\RS_{3, 1, 3}$ & 35 & 8 & 3 & 3 & 2 & 5.004 & 0.004771 \\
$\RS_{3, 1, 3}$ & 35 & 8 & 3 & 3 & 3 & 2.441 & 0.05178 \\
$\RS_{3, 1, 3}$ & 35 & 8 & 3 & 3 & 4 & 3.436 & 0.5878 \\
$\RS_{3, 1, 3}$ & 35 & 8 & 3 & 3 & 5 & \textsc{to} & 6.156 \\
$\RS_{3, 1, 3}$ & 35 & 8 & 3 & 3 & 6 & \textsc{to} & 66.51 \\
$\RS_{3, 1, 3}$ & 35 & 8 & 3 & 3 & 7 & \textsc{to} & 716 \\
$\RS_{3, 2, 3}$ & 43 & 8 & 3 & 3 & 1 & 2.14 & 0.000483 \\
$\RS_{3, 2, 3}$ & 43 & 8 & 3 & 3 & 2 & 4.526 & 0.004735 \\
$\RS_{3, 2, 3}$ & 43 & 8 & 3 & 3 & 3 & 7.225 & 0.05159 \\
$\RS_{3, 2, 3}$ & 43 & 8 & 3 & 3 & 4 & \textsc{to} & 0.5671 \\
$\RS_{3, 2, 3}$ & 43 & 8 & 3 & 3 & 5 & \textsc{to} & 6.15 \\
$\RS_{3, 2, 3}$ & 43 & 8 & 3 & 3 & 6 & \textsc{to} & 66.86 \\
$\RS_{3, 2, 3}$ & 43 & 8 & 3 & 3 & 7 & \textsc{to} & 736.1 \\
$\RS_{3, 3, 3}$ & 21 & 8 & 3 & 1 & 1 & 2.797 & 0.000353 \\
$\RS_{3, 3, 3}$ & 21 & 8 & 3 & 1 & 2 & 5.977 & 0.002736 \\
$\RS_{3, 3, 3}$ & 21 & 8 & 3 & 1 & 3 & 9.338 & 0.02959 \\
$\RS_{3, 3, 3}$ & 21 & 8 & 3 & 1 & 4 & \textsc{to} & 0.3233 \\
$\RS_{3, 3, 3}$ & 21 & 8 & 3 & 1 & 5 & \textsc{to} & 3.525 \\
$\RS_{3, 3, 3}$ & 21 & 8 & 3 & 1 & 6 & 60.94 & 38.46 \\
$\RS_{3, 3, 3}$ & 21 & 8 & 3 & 1 & 7 & \textsc{to} & 421.8 \\
$\RS_{3, 2, 5}$ & 133 & 10 & 3 & 10 & 1 & 13.97 & 0.001451 \\
$\RS_{3, 2, 5}$ & 133 & 10 & 3 & 10 & 2 & \textsc{to} & 0.02059 \\
$\RS_{3, 2, 5}$ & 133 & 10 & 3 & 10 & 3 & 46.86 & 0.3065 \\
$\RS_{3, 2, 5}$ & 133 & 10 & 3 & 10 & 4 & 200.7 & 4.567 \\
$\RS_{3, 2, 5}$ & 133 & 10 & 3 & 10 & 5 & \textsc{to} & 67.1 \\
$\RS_{3, 2, 5}$ & 133 & 10 & 3 & 10 & 6 & \textsc{to} & 997.7 \\
$\RS_{3, 2, 5}$ & 133 & 10 & 3 & 10 & 7 & \textsc{to} & \textsc{to} \\
\bottomrule
\end{tabular}
\end{table}

\begin{table}[t]
\centering
\small
\caption{Rock sampling full benchmark results (Part 5 of 5).}
\label{tab:rock-sample-full-4}
\begin{tabular}{lrrrrrrr}
\toprule
Model & $|\POMDPStates|$ & $|\POMDPActions|$ & $|\POMDPObs|$ & $n$ & $\horizon$ & Time \cite{bovymulti} (s) & Time ours (s) \\
\midrule
$\RS_{3, 3, 5}$ & 173 & 10 & 3 & 10 & 1 & 36.18 & 0.006625 \\
$\RS_{3, 3, 5}$ & 173 & 10 & 3 & 10 & 2 & \textsc{to} & 0.02006 \\
$\RS_{3, 3, 5}$ & 173 & 10 & 3 & 10 & 3 & 121.7 & 0.2983 \\
$\RS_{3, 3, 5}$ & 173 & 10 & 3 & 10 & 4 & \textsc{to} & 4.436 \\
$\RS_{3, 3, 5}$ & 173 & 10 & 3 & 10 & 5 & \textsc{to} & 65.58 \\
$\RS_{3, 3, 5}$ & 173 & 10 & 3 & 10 & 6 & \textsc{to} & 1041 \\
$\RS_{3, 3, 5}$ & 173 & 10 & 3 & 10 & 7 & \textsc{to} & \textsc{to} \\
$\RS_{3, 4, 5}$ & 102 & 10 & 3 & 5 & 1 & 51.55 & 0.07048 \\
$\RS_{3, 4, 5}$ & 102 & 10 & 3 & 5 & 2 & 356 & 0.03017 \\
$\RS_{3, 4, 5}$ & 102 & 10 & 3 & 5 & 3 & 172.6 & 0.2155 \\
$\RS_{3, 4, 5}$ & 102 & 10 & 3 & 5 & 4 & \textsc{to} & 3.038 \\
$\RS_{3, 4, 5}$ & 102 & 10 & 3 & 5 & 5 & \textsc{to} & 44.98 \\
$\RS_{3, 4, 5}$ & 102 & 10 & 3 & 5 & 6 & \textsc{to} & 669.2 \\
$\RS_{3, 4, 5}$ & 102 & 10 & 3 & 5 & 7 & \textsc{to} & \textsc{to} \\
$\RS_{3, 2, 7}$ & 246 & 12 & 3 & 21 & 1 & 54.27 & 0.04466 \\
$\RS_{3, 2, 7}$ & 246 & 12 & 3 & 21 & 2 & 115.9 & 0.06734 \\
$\RS_{3, 2, 7}$ & 246 & 12 & 3 & 21 & 3 & 185.3 & 1.277 \\
$\RS_{3, 2, 7}$ & 246 & 12 & 3 & 21 & 4 & \textsc{to} & 24.3 \\
$\RS_{3, 2, 7}$ & 246 & 12 & 3 & 21 & 5 & \textsc{to} & 459.9 \\
$\RS_{3, 2, 7}$ & 246 & 12 & 3 & 21 & 6 & \textsc{to} & \textsc{to} \\
$\RS_{3, 2, 7}$ & 246 & 12 & 3 & 21 & 7 & \textsc{to} & \textsc{to} \\
$\RS_{3, 4, 7}$ & 574 & 12 & 3 & 35 & 1 & 627.6 & 0.005945 \\
$\RS_{3, 4, 7}$ & 574 & 12 & 3 & 35 & 2 & 1339 & 0.1132 \\
$\RS_{3, 4, 7}$ & 574 & 12 & 3 & 35 & 3 & 2129 & 2.157 \\
$\RS_{3, 4, 7}$ & 574 & 12 & 3 & 35 & 4 & \textsc{to} & 40.9 \\
$\RS_{3, 4, 7}$ & 574 & 12 & 3 & 35 & 5 & \textsc{to} & 783.8 \\
$\RS_{3, 4, 7}$ & 574 & 12 & 3 & 35 & 6 & \textsc{to} & \textsc{to} \\
$\RS_{3, 4, 7}$ & 574 & 12 & 3 & 35 & 7 & \textsc{to} & \textsc{to} \\
$\RS_{3, 6, 7}$ & 192 & 12 & 3 & 7 & 1 & 1029 & 0.001572 \\
$\RS_{3, 6, 7}$ & 192 & 12 & 3 & 7 & 2 & \textsc{to} & 0.02758 \\
$\RS_{3, 6, 7}$ & 192 & 12 & 3 & 7 & 3 & 3493 & 0.5194 \\
$\RS_{3, 6, 7}$ & 192 & 12 & 3 & 7 & 4 & \textsc{to} & 9.806 \\
$\RS_{3, 6, 7}$ & 192 & 12 & 3 & 7 & 5 & \textsc{to} & 184.6 \\
$\RS_{3, 6, 7}$ & 192 & 12 & 3 & 7 & 6 & \textsc{to} & 3477 \\
$\RS_{3, 6, 7}$ & 192 & 12 & 3 & 7 & 7 & \textsc{to} & \textsc{to} \\
\bottomrule
\end{tabular}
\end{table}

\begin{table}[t]
\centering
\small
\caption{Robot navigation full benchmark results (Part 1 of 4).}
\label{tab:robot-navigation-full-0}
\begin{tabular}{lrrrrrrr}
\toprule
Model & $|\POMDPStates|$ & $|\POMDPActions|$ & $|\POMDPObs|$ & $n$ & $\horizon$ & Time \cite{bovymulti} (s) & Time ours (s) \\
\midrule
$\RN_{\textsc{cit}, 1}$ & 33 & 4 & 28 & 2 & 1 & 151.8 & 0.002365 \\
$\RN_{\textsc{cit}, 1}$ & 33 & 4 & 28 & 2 & 2 & \textsc{to} & 0.04038 \\
$\RN_{\textsc{cit}, 1}$ & 33 & 4 & 28 & 2 & 3 & \textsc{to} & 0.2864 \\
$\RN_{\textsc{cit}, 1}$ & 33 & 4 & 28 & 2 & 4 & \textsc{to} & 4.136 \\
$\RN_{\textsc{cit}, 1}$ & 33 & 4 & 28 & 2 & 5 & \textsc{to} & 60.3 \\
$\RN_{\textsc{cit}, 1}$ & 33 & 4 & 28 & 2 & 6 & \textsc{to} & 826.8 \\
$\RN_{\textsc{cit}, 2}$ & 33 & 4 & 28 & 2 & 1 & 150.9 & 0.002247 \\
$\RN_{\textsc{cit}, 2}$ & 33 & 4 & 28 & 2 & 2 & 305.3 & 0.017 \\
$\RN_{\textsc{cit}, 2}$ & 33 & 4 & 28 & 2 & 3 & \textsc{to} & 0.2568 \\
$\RN_{\textsc{cit}, 2}$ & 33 & 4 & 28 & 2 & 4 & \textsc{to} & 3.935 \\
$\RN_{\textsc{cit}, 2}$ & 33 & 4 & 28 & 2 & 5 & \textsc{to} & 57.73 \\
$\RN_{\textsc{cit}, 2}$ & 33 & 4 & 28 & 2 & 6 & \textsc{to} & 786.3 \\
$\RN_{\textsc{cit}, 3}$ & 48 & 4 & 28 & 2 & 1 & 150.3 & 0.003128 \\
$\RN_{\textsc{cit}, 3}$ & 48 & 4 & 28 & 2 & 2 & 304.2 & 0.0223 \\
$\RN_{\textsc{cit}, 3}$ & 48 & 4 & 28 & 2 & 3 & 457.3 & 0.3325 \\
$\RN_{\textsc{cit}, 3}$ & 48 & 4 & 28 & 2 & 4 & \textsc{to} & 4.921 \\
$\RN_{\textsc{cit}, 3}$ & 48 & 4 & 28 & 2 & 5 & \textsc{to} & 69.29 \\
$\RN_{\textsc{cit}, 3}$ & 48 & 4 & 28 & 2 & 6 & \textsc{to} & 914.1 \\
\bottomrule
\end{tabular}
\end{table}

\begin{table}[t]
\centering
\small
\caption{Robot navigation full benchmark results (Part 2 of 4).}
\label{tab:robot-navigation-full-1}
\begin{tabular}{lrrrrrrr}
\toprule
Model & $|\POMDPStates|$ & $|\POMDPActions|$ & $|\POMDPObs|$ & $n$ & $\horizon$ & Time \cite{bovymulti} (s) & Time ours (s) \\
\midrule
$\RN_{\textsc{mit}, 1}$ & 47 & 4 & 28 & 2 & 1 & 77.93 & 0.00228 \\
$\RN_{\textsc{mit}, 1}$ & 47 & 4 & 28 & 2 & 2 & \textsc{to} & 0.01699 \\
$\RN_{\textsc{mit}, 1}$ & 47 & 4 & 28 & 2 & 3 & \textsc{to} & 0.2556 \\
$\RN_{\textsc{mit}, 1}$ & 47 & 4 & 28 & 2 & 4 & \textsc{to} & 3.737 \\
$\RN_{\textsc{mit}, 1}$ & 47 & 4 & 28 & 2 & 5 & \textsc{to} & \textsc{to} \\
$\RN_{\textsc{mit}, 1}$ & 47 & 4 & 28 & 2 & 6 & \textsc{to} & \textsc{to} \\
$\RN_{\textsc{mit}, 2}$ & 41 & 4 & 28 & 2 & 1 & 77.54 & 0.001988 \\
$\RN_{\textsc{mit}, 2}$ & 41 & 4 & 28 & 2 & 2 & 155.4 & 0.01937 \\
$\RN_{\textsc{mit}, 2}$ & 41 & 4 & 28 & 2 & 3 & \textsc{to} & 0.3086 \\
$\RN_{\textsc{mit}, 2}$ & 41 & 4 & 28 & 2 & 4 & \textsc{to} & 21.56 \\
$\RN_{\textsc{mit}, 2}$ & 41 & 4 & 28 & 2 & 5 & \textsc{to} & \textsc{to} \\
$\RN_{\textsc{mit}, 2}$ & 41 & 4 & 28 & 2 & 6 & \textsc{to} & \textsc{to} \\
$\RN_{\textsc{mit}, 3}$ & 36 & 4 & 28 & 2 & 1 & 78.39 & 0.04544 \\
$\RN_{\textsc{mit}, 3}$ & 36 & 4 & 28 & 2 & 2 & 156.8 & 0.02158 \\
$\RN_{\textsc{mit}, 3}$ & 36 & 4 & 28 & 2 & 3 & 238.3 & 0.3299 \\
$\RN_{\textsc{mit}, 3}$ & 36 & 4 & 28 & 2 & 4 & \textsc{to} & 4.802 \\
$\RN_{\textsc{mit}, 3}$ & 36 & 4 & 28 & 2 & 5 & \textsc{to} & 70.02 \\
$\RN_{\textsc{mit}, 3}$ & 36 & 4 & 28 & 2 & 6 & \textsc{to} & 1096 \\
\bottomrule
\end{tabular}
\end{table}

\begin{table}[t]
\centering
\small
\caption{Robot navigation full benchmark results (Part 3 of 4).}
\label{tab:robot-navigation-full-2}
\begin{tabular}{lrrrrrrr}
\toprule
Model & $|\POMDPStates|$ & $|\POMDPActions|$ & $|\POMDPObs|$ & $n$ & $\horizon$ & Time \cite{bovymulti} (s) & Time ours (s) \\
\midrule
$\RN_{\textsc{pen.}, 1}$ & 34 & 4 & 28 & 2 & 1 & 84.61 & 0.002054 \\
$\RN_{\textsc{pen.}, 1}$ & 34 & 4 & 28 & 2 & 2 & \textsc{to} & 0.05489 \\
$\RN_{\textsc{pen.}, 1}$ & 34 & 4 & 28 & 2 & 3 & \textsc{to} & 0.2966 \\
$\RN_{\textsc{pen.}, 1}$ & 34 & 4 & 28 & 2 & 4 & \textsc{to} & 4.433 \\
$\RN_{\textsc{pen.}, 1}$ & 34 & 4 & 28 & 2 & 5 & \textsc{to} & 65.52 \\
$\RN_{\textsc{pen.}, 1}$ & 34 & 4 & 28 & 2 & 6 & \textsc{to} & 905.4 \\
$\RN_{\textsc{pen.}, 2}$ & 35 & 4 & 28 & 2 & 1 & 85.13 & 0.002313 \\
$\RN_{\textsc{pen.}, 2}$ & 35 & 4 & 28 & 2 & 2 & 171.3 & 0.01818 \\
$\RN_{\textsc{pen.}, 2}$ & 35 & 4 & 28 & 2 & 3 & \textsc{to} & 0.2877 \\
$\RN_{\textsc{pen.}, 2}$ & 35 & 4 & 28 & 2 & 4 & \textsc{to} & 4.33 \\
$\RN_{\textsc{pen.}, 2}$ & 35 & 4 & 28 & 2 & 5 & \textsc{to} & 64.54 \\
$\RN_{\textsc{pen.}, 2}$ & 35 & 4 & 28 & 2 & 6 & \textsc{to} & 902.1 \\
$\RN_{\textsc{pen.}, 3}$ & 25 & 4 & 28 & 2 & 1 & 85.39 & 0.003148 \\
$\RN_{\textsc{pen.}, 3}$ & 25 & 4 & 28 & 2 & 2 & 170 & 0.01943 \\
$\RN_{\textsc{pen.}, 3}$ & 25 & 4 & 28 & 2 & 3 & 261.1 & 0.2908 \\
$\RN_{\textsc{pen.}, 3}$ & 25 & 4 & 28 & 2 & 4 & \textsc{to} & 4.37 \\
$\RN_{\textsc{pen.}, 3}$ & 25 & 4 & 28 & 2 & 5 & \textsc{to} & 64.95 \\
$\RN_{\textsc{pen.}, 3}$ & 25 & 4 & 28 & 2 & 6 & \textsc{to} & 919.8 \\
\bottomrule
\end{tabular}
\end{table}

\begin{table}[t]
\centering
\small
\caption{Robot navigation full benchmark results (Part 4 of 4).}
\label{tab:robot-navigation-full-3}
\begin{tabular}{lrrrrrrr}
\toprule
Model & $|\POMDPStates|$ & $|\POMDPActions|$ & $|\POMDPObs|$ & $n$ & $\horizon$ & Time \cite{bovymulti} (s) & Time ours (s) \\
\midrule
$\RN_{\textsc{sunysb}, 1}$ & 39 & 4 & 28 & 2 & 1 & 170 & 0.002059 \\
$\RN_{\textsc{sunysb}, 1}$ & 39 & 4 & 28 & 2 & 2 & \textsc{to} & 0.02027 \\
$\RN_{\textsc{sunysb}, 1}$ & 39 & 4 & 28 & 2 & 3 & \textsc{to} & 0.2996 \\
$\RN_{\textsc{sunysb}, 1}$ & 39 & 4 & 28 & 2 & 4 & \textsc{to} & 4.574 \\
$\RN_{\textsc{sunysb}, 1}$ & 39 & 4 & 28 & 2 & 5 & \textsc{to} & 67.88 \\
$\RN_{\textsc{sunysb}, 1}$ & 39 & 4 & 28 & 2 & 6 & \textsc{to} & 947.4 \\
$\RN_{\textsc{sunysb}, 2}$ & 29 & 4 & 28 & 2 & 1 & 168.6 & 0.002042 \\
$\RN_{\textsc{sunysb}, 2}$ & 29 & 4 & 28 & 2 & 2 & 338.9 & 0.01603 \\
$\RN_{\textsc{sunysb}, 2}$ & 29 & 4 & 28 & 2 & 3 & \textsc{to} & 0.2455 \\
$\RN_{\textsc{sunysb}, 2}$ & 29 & 4 & 28 & 2 & 4 & \textsc{to} & 3.717 \\
$\RN_{\textsc{sunysb}, 2}$ & 29 & 4 & 28 & 2 & 5 & \textsc{to} & 54.77 \\
$\RN_{\textsc{sunysb}, 2}$ & 29 & 4 & 28 & 2 & 6 & \textsc{to} & 778.2 \\
$\RN_{\textsc{sunysb}, 3}$ & 51 & 4 & 28 & 2 & 1 & 169.7 & 0.003113 \\
$\RN_{\textsc{sunysb}, 3}$ & 51 & 4 & 28 & 2 & 2 & 342.6 & 0.01639 \\
$\RN_{\textsc{sunysb}, 3}$ & 51 & 4 & 28 & 2 & 3 & 516.6 & 0.2474 \\
$\RN_{\textsc{sunysb}, 3}$ & 51 & 4 & 28 & 2 & 4 & \textsc{to} & 3.66 \\
$\RN_{\textsc{sunysb}, 3}$ & 51 & 4 & 28 & 2 & 5 & \textsc{to} & 52.69 \\
$\RN_{\textsc{sunysb}, 3}$ & 51 & 4 & 28 & 2 & 6 & \textsc{to} & 714 \\
\bottomrule
\end{tabular}
\end{table}

\begin{table}[t]
\centering
\small
\caption{Identification (friend or foe) full benchmark results (Part 1 of 3).}
\label{tab:iff-full-0}
\begin{tabular}{lrrrrrrr}
\toprule
Model & $|\POMDPStates|$ & $|\POMDPActions|$ & $|\POMDPObs|$ & $n$ & $\horizon$ & Time \cite{bovymulti} (s) & Time ours (s) \\
\midrule
$\IFF_{1, 2, 0, 2}$ & 20 & 4 & 22 & 3 & 1 & \textsc{to} & 0.003146 \\
$\IFF_{1, 2, 0, 2}$ & 20 & 4 & 22 & 3 & 2 & error & 0.03503 \\
$\IFF_{1, 2, 0, 2}$ & 20 & 4 & 22 & 3 & 3 & \textsc{to} & 0.4615 \\
$\IFF_{1, 2, 0, 2}$ & 20 & 4 & 22 & 3 & 4 & \textsc{to} & 4.773 \\
$\IFF_{1, 2, 0, 2}$ & 20 & 4 & 22 & 3 & 5 & \textsc{to} & 62.33 \\
$\IFF_{1, 2, 0, 2}$ & 20 & 4 & 22 & 3 & 6 & \textsc{to} & 1099 \\
$\IFF_{1, 2, 0, 4}$ & 18 & 4 & 22 & 3 & 1 & \textsc{to} & 0.1386 \\
$\IFF_{1, 2, 0, 4}$ & 18 & 4 & 22 & 3 & 2 & \textsc{to} & 0.04797 \\
$\IFF_{1, 2, 0, 4}$ & 18 & 4 & 22 & 3 & 3 & \textsc{to} & 0.3258 \\
$\IFF_{1, 2, 0, 4}$ & 18 & 4 & 22 & 3 & 4 & \textsc{to} & 3.342 \\
$\IFF_{1, 2, 0, 4}$ & 18 & 4 & 22 & 3 & 5 & \textsc{to} & 60.24 \\
$\IFF_{1, 2, 0, 4}$ & 18 & 4 & 22 & 3 & 6 & \textsc{to} & 412.6 \\
$\IFF_{1, 2, 2, 0}$ & 21 & 4 & 22 & 3 & 1 & \textsc{to} & 0.003343 \\
$\IFF_{1, 2, 2, 0}$ & 21 & 4 & 22 & 3 & 2 & error & 0.03263 \\
$\IFF_{1, 2, 2, 0}$ & 21 & 4 & 22 & 3 & 3 & \textsc{to} & 0.4695 \\
$\IFF_{1, 2, 2, 0}$ & 21 & 4 & 22 & 3 & 4 & \textsc{to} & 5.215 \\
$\IFF_{1, 2, 2, 0}$ & 21 & 4 & 22 & 3 & 5 & \textsc{to} & 48.06 \\
$\IFF_{1, 2, 2, 0}$ & 21 & 4 & 22 & 3 & 6 & \textsc{to} & 269.5 \\
$\IFF_{1, 2, 2, 4}$ & 15 & 4 & 22 & 3 & 1 & \textsc{to} & 0.139 \\
$\IFF_{1, 2, 2, 4}$ & 15 & 4 & 22 & 3 & 2 & error & 0.0605 \\
$\IFF_{1, 2, 2, 4}$ & 15 & 4 & 22 & 3 & 3 & \textsc{to} & 0.3132 \\
$\IFF_{1, 2, 2, 4}$ & 15 & 4 & 22 & 3 & 4 & \textsc{to} & 3.149 \\
$\IFF_{1, 2, 2, 4}$ & 15 & 4 & 22 & 3 & 5 & \textsc{to} & 37.9 \\
$\IFF_{1, 2, 2, 4}$ & 15 & 4 & 22 & 3 & 6 & \textsc{to} & 271 \\
$\IFF_{1, 2, 4, 0}$ & 21 & 4 & 22 & 3 & 1 & \textsc{to} & 0.1388 \\
$\IFF_{1, 2, 4, 0}$ & 21 & 4 & 22 & 3 & 2 & error & 0.06559 \\
$\IFF_{1, 2, 4, 0}$ & 21 & 4 & 22 & 3 & 3 & \textsc{to} & 0.4053 \\
$\IFF_{1, 2, 4, 0}$ & 21 & 4 & 22 & 3 & 4 & \textsc{to} & 4.727 \\
$\IFF_{1, 2, 4, 0}$ & 21 & 4 & 22 & 3 & 5 & \textsc{to} & 44.38 \\
$\IFF_{1, 2, 4, 0}$ & 21 & 4 & 22 & 3 & 6 & \textsc{to} & 244.1 \\
$\IFF_{1, 2, 4, 2}$ & 17 & 4 & 22 & 3 & 1 & \textsc{to} & 0.1388 \\
$\IFF_{1, 2, 4, 2}$ & 17 & 4 & 22 & 3 & 2 & error & 0.06575 \\
$\IFF_{1, 2, 4, 2}$ & 17 & 4 & 22 & 3 & 3 & \textsc{to} & 0.39 \\
$\IFF_{1, 2, 4, 2}$ & 17 & 4 & 22 & 3 & 4 & \textsc{to} & 4.096 \\
$\IFF_{1, 2, 4, 2}$ & 17 & 4 & 22 & 3 & 5 & \textsc{to} & 36.52 \\
$\IFF_{1, 2, 4, 2}$ & 17 & 4 & 22 & 3 & 6 & \textsc{to} & 239.5 \\
\bottomrule
\end{tabular}
\end{table}

\begin{table}[t]
\centering
\small
\caption{Identification (friend or foe) full benchmark results (Part 2 of 3).}
\label{tab:iff-full-1}
\begin{tabular}{lrrrrrrr}
\toprule
Model & $|\POMDPStates|$ & $|\POMDPActions|$ & $|\POMDPObs|$ & $n$ & $\horizon$ & Time \cite{bovymulti} (s) & Time ours (s) \\
\midrule
$\IFF_{1, 3, 0, 2}$ & 24 & 4 & 22 & 3 & 1 & \textsc{to} & 0.1376 \\
$\IFF_{1, 3, 0, 2}$ & 24 & 4 & 22 & 3 & 2 & 141.6 & 0.07371 \\
$\IFF_{1, 3, 0, 2}$ & 24 & 4 & 22 & 3 & 3 & 213.5 & 0.6066 \\
$\IFF_{1, 3, 0, 2}$ & 24 & 4 & 22 & 3 & 4 & error & 208 \\
$\IFF_{1, 3, 0, 2}$ & 24 & 4 & 22 & 3 & 5 & error & \textsc{to} \\
$\IFF_{1, 3, 0, 2}$ & 24 & 4 & 22 & 3 & 6 & error & \textsc{to} \\
$\IFF_{1, 3, 0, 4}$ & 20 & 4 & 22 & 3 & 1 & \textsc{to} & 0.003086 \\
$\IFF_{1, 3, 0, 4}$ & 20 & 4 & 22 & 3 & 2 & 141.9 & 0.02816 \\
$\IFF_{1, 3, 0, 4}$ & 20 & 4 & 22 & 3 & 3 & 408.1 & 0.4326 \\
$\IFF_{1, 3, 0, 4}$ & 20 & 4 & 22 & 3 & 4 & \textsc{to} & 130.8 \\
$\IFF_{1, 3, 0, 4}$ & 20 & 4 & 22 & 3 & 5 & \textsc{to} & \textsc{to} \\
$\IFF_{1, 3, 0, 4}$ & 20 & 4 & 22 & 3 & 6 & error & \textsc{to} \\
$\IFF_{1, 3, 2, 0}$ & 26 & 4 & 22 & 3 & 1 & \textsc{to} & 0.002936 \\
$\IFF_{1, 3, 2, 0}$ & 26 & 4 & 22 & 3 & 2 & 141.9 & 0.03683 \\
$\IFF_{1, 3, 2, 0}$ & 26 & 4 & 22 & 3 & 3 & 514.2 & 0.6137 \\
$\IFF_{1, 3, 2, 0}$ & 26 & 4 & 22 & 3 & 4 & error & 134 \\
$\IFF_{1, 3, 2, 0}$ & 26 & 4 & 22 & 3 & 5 & \textsc{to} & \textsc{to} \\
$\IFF_{1, 3, 2, 0}$ & 26 & 4 & 22 & 3 & 6 & \textsc{to} & \textsc{to} \\
$\IFF_{1, 3, 2, 4}$ & 17 & 4 & 22 & 3 & 1 & \textsc{to} & 0.003083 \\
$\IFF_{1, 3, 2, 4}$ & 17 & 4 & 22 & 3 & 2 & 142.5 & 0.02751 \\
$\IFF_{1, 3, 2, 4}$ & 17 & 4 & 22 & 3 & 3 & 515.4 & 0.4297 \\
$\IFF_{1, 3, 2, 4}$ & 17 & 4 & 22 & 3 & 4 & \textsc{to} & 131.8 \\
$\IFF_{1, 3, 2, 4}$ & 17 & 4 & 22 & 3 & 5 & \textsc{to} & \textsc{to} \\
$\IFF_{1, 3, 2, 4}$ & 17 & 4 & 22 & 3 & 6 & \textsc{to} & \textsc{to} \\
$\IFF_{1, 3, 4, 0}$ & 26 & 4 & 22 & 3 & 1 & 7.666 & 0.003359 \\
$\IFF_{1, 3, 4, 0}$ & 26 & 4 & 22 & 3 & 2 & error & 0.02967 \\
$\IFF_{1, 3, 4, 0}$ & 26 & 4 & 22 & 3 & 3 & error & 0.5556 \\
$\IFF_{1, 3, 4, 0}$ & 26 & 4 & 22 & 3 & 4 & error & 185.3 \\
$\IFF_{1, 3, 4, 0}$ & 26 & 4 & 22 & 3 & 5 & error & \textsc{to} \\
$\IFF_{1, 3, 4, 0}$ & 26 & 4 & 22 & 3 & 6 & error & \textsc{to} \\
$\IFF_{1, 3, 4, 2}$ & 21 & 4 & 22 & 3 & 1 & \textsc{to} & 0.004064 \\
$\IFF_{1, 3, 4, 2}$ & 21 & 4 & 22 & 3 & 2 & 165.2 & 0.03064 \\
$\IFF_{1, 3, 4, 2}$ & 21 & 4 & 22 & 3 & 3 & error & 0.5227 \\
$\IFF_{1, 3, 4, 2}$ & 21 & 4 & 22 & 3 & 4 & \textsc{to} & 114.9 \\
$\IFF_{1, 3, 4, 2}$ & 21 & 4 & 22 & 3 & 5 & \textsc{to} & \textsc{to} \\
$\IFF_{1, 3, 4, 2}$ & 21 & 4 & 22 & 3 & 6 & \textsc{to} & \textsc{to} \\
\bottomrule
\end{tabular}
\end{table}

\begin{table}[t]
\centering
\small
\caption{Identification (friend or foe) full benchmark results (Part 3 of 3).}
\label{tab:iff-full-2}
\begin{tabular}{lrrrrrrr}
\toprule
Model & $|\POMDPStates|$ & $|\POMDPActions|$ & $|\POMDPObs|$ & $n$ & $\horizon$ & Time \cite{bovymulti} (s) & Time ours (s) \\
\midrule
$\IFF_{2, 3, 0, 2}$ & 26 & 4 & 22 & 3 & 1 & \textsc{to} & 0.003104 \\
$\IFF_{2, 3, 0, 2}$ & 26 & 4 & 22 & 3 & 2 & 137.3 & 0.03796 \\
$\IFF_{2, 3, 0, 2}$ & 26 & 4 & 22 & 3 & 3 & \textsc{to} & 0.6793 \\
$\IFF_{2, 3, 0, 2}$ & 26 & 4 & 22 & 3 & 4 & \textsc{to} & 75.48 \\
$\IFF_{2, 3, 0, 2}$ & 26 & 4 & 22 & 3 & 5 & \textsc{to} & \textsc{to} \\
$\IFF_{2, 3, 0, 2}$ & 26 & 4 & 22 & 3 & 6 & \textsc{to} & \textsc{to} \\
$\IFF_{2, 3, 0, 4}$ & 24 & 4 & 22 & 3 & 1 & \textsc{to} & 0.002859 \\
$\IFF_{2, 3, 0, 4}$ & 24 & 4 & 22 & 3 & 2 & 136.7 & 0.02701 \\
$\IFF_{2, 3, 0, 4}$ & 24 & 4 & 22 & 3 & 3 & \textsc{to} & 0.4624 \\
$\IFF_{2, 3, 0, 4}$ & 24 & 4 & 22 & 3 & 4 & \textsc{to} & 47.96 \\
$\IFF_{2, 3, 0, 4}$ & 24 & 4 & 22 & 3 & 5 & \textsc{to} & \textsc{to} \\
$\IFF_{2, 3, 0, 4}$ & 24 & 4 & 22 & 3 & 6 & \textsc{to} & \textsc{to} \\
$\IFF_{2, 3, 2, 0}$ & 27 & 4 & 22 & 3 & 1 & \textsc{to} & 0.002836 \\
$\IFF_{2, 3, 2, 0}$ & 27 & 4 & 22 & 3 & 2 & 138.9 & 0.03382 \\
$\IFF_{2, 3, 2, 0}$ & 27 & 4 & 22 & 3 & 3 & \textsc{to} & 0.6166 \\
$\IFF_{2, 3, 2, 0}$ & 27 & 4 & 22 & 3 & 4 & \textsc{to} & 50 \\
$\IFF_{2, 3, 2, 0}$ & 27 & 4 & 22 & 3 & 5 & \textsc{to} & \textsc{to} \\
$\IFF_{2, 3, 2, 0}$ & 27 & 4 & 22 & 3 & 6 & \textsc{to} & \textsc{to} \\
$\IFF_{2, 3, 2, 4}$ & 19 & 4 & 22 & 3 & 1 & \textsc{to} & 0.002635 \\
$\IFF_{2, 3, 2, 4}$ & 19 & 4 & 22 & 3 & 2 & 142.7 & 0.02685 \\
$\IFF_{2, 3, 2, 4}$ & 19 & 4 & 22 & 3 & 3 & error & 0.4497 \\
$\IFF_{2, 3, 2, 4}$ & 19 & 4 & 22 & 3 & 4 & \textsc{to} & 38.59 \\
$\IFF_{2, 3, 2, 4}$ & 19 & 4 & 22 & 3 & 5 & \textsc{to} & \textsc{to} \\
$\IFF_{2, 3, 2, 4}$ & 19 & 4 & 22 & 3 & 6 & \textsc{to} & \textsc{to} \\
$\IFF_{2, 3, 4, 0}$ & 27 & 4 & 22 & 3 & 1 & \textsc{to} & 0.002888 \\
$\IFF_{2, 3, 4, 0}$ & 27 & 4 & 22 & 3 & 2 & 138.8 & 0.02922 \\
$\IFF_{2, 3, 4, 0}$ & 27 & 4 & 22 & 3 & 3 & \textsc{to} & 0.4841 \\
$\IFF_{2, 3, 4, 0}$ & 27 & 4 & 22 & 3 & 4 & \textsc{to} & 47.38 \\
$\IFF_{2, 3, 4, 0}$ & 27 & 4 & 22 & 3 & 5 & \textsc{to} & \textsc{to} \\
$\IFF_{2, 3, 4, 0}$ & 27 & 4 & 22 & 3 & 6 & \textsc{to} & \textsc{to} \\
$\IFF_{2, 3, 4, 2}$ & 21 & 4 & 22 & 3 & 1 & \textsc{to} & 0.003189 \\
$\IFF_{2, 3, 4, 2}$ & 21 & 4 & 22 & 3 & 2 & 139.1 & 0.02644 \\
$\IFF_{2, 3, 4, 2}$ & 21 & 4 & 22 & 3 & 3 & \textsc{to} & 0.4669 \\
$\IFF_{2, 3, 4, 2}$ & 21 & 4 & 22 & 3 & 4 & \textsc{to} & 63.63 \\
$\IFF_{2, 3, 4, 2}$ & 21 & 4 & 22 & 3 & 5 & \textsc{to} & \textsc{to} \\
$\IFF_{2, 3, 4, 2}$ & 21 & 4 & 22 & 3 & 6 & \textsc{to} & \textsc{to} \\
\bottomrule
\end{tabular}
\end{table}


\end{document}